\theoremstyle{plain}
\newtheorem{theorem}{Theorem}[section]
\theoremstyle{definition}
\newtheorem{definition}[theorem]{Definition}
\newtheorem{assumption}[theorem]{Assumption}
\theoremstyle{remark}
\definecolor{lightgray}{gray}{0.9}
\newcommand*\dbar[1]{\overline{\overline{\lower0.2ex\hbox{$#1$}}}}
\newcommand{\harrow}[1]{\mathstrut\mkern2.5mu#1\mkern-11mu\raise1.6ex
\hbox{$\scriptscriptstyle\rightharpoonup$}}
\definecolor{Gray}{gray}{0.9}
\newcommand{\optionalsuperscript}[1]{\ifthenelse{\isempty{#1}}{}{^{(#1)}}}
\newtheorem{property}{Property}
\def\eqref#1{equation~\ref{#1}}
\def\1{\bm{1}}
\def\mE{{\bm{E}}}
\def\mX{{\bm{X}}}
\DeclareMathAlphabet{\mathsfit}{\encodingdefault}{\sfdefault}{m}{sl}
\SetMathAlphabet{\mathsfit}{bold}{\encodingdefault}{\sfdefault}{bx}{n}
\def\gG{{\mathcal{G}}}
\def\gT{{\mathcal{T}}}
\newcommand{\R}{\mathbb{R}}
\title{\textsc{\textsc{HyTrel}}: \underline{Hy}pergraph-enhanced \\ \underline{T}abular Data \underline{Re}presentation \underline{L}earning}
\author{%
  Pei Chen$^{1*}$, Soumajyoti Sarkar$^{2}$, Leonard Lausen$^2$, \\ \textbf{Balasubramaniam Srinivasan$^2$,  Sheng Zha$^2$, Ruihong Huang$^1$, George Karypis$^2$}\\
  $^1$Texas A\&M University,
  $^2$Amazon Web Services \\
  \texttt{\{chenpei,huangrh\}@tamu.edu}, \\
  \texttt{\{soumajs,lausen,srbalasu,zhasheng,gkarypis\}@amazon.com}
  % examples of more authors
  % \And
  % Coauthor \\
  % Affiliation \\
  % Address \\
  % \texttt{email} \\
  % \AND
  % Coauthor \\
  % Affiliation \\
  % Address \\
  % \texttt{email} \\
  % \And
  % Coauthor \\
  % Affiliation \\
  % Address \\
  % \texttt{email} \\
  % \And
  % Coauthor \\
  % Affiliation \\
  % Address \\
  % \texttt{email} \\
}
\begin{document}

\maketitle

\begin{abstract}
Language models pretrained on large collections of tabular data have demonstrated their effectiveness in several downstream tasks.
However, 
% the pretraining objectives used in 
many of these models do not take into account the row/column
permutation invariances, hierarchical structure,
% interactions within and across rows and columns, 
etc. that exist in tabular data. 
To alleviate these limitations, we propose \textbf{\textsc{HyTrel}}, a tabular language model, that captures the permutation invariances and three more  \textbf{\textit{structural properties}} of tabular data by using hypergraphs--where the table cells make up the nodes and the cells occurring jointly together in each row, column, and the entire table are used to form three different types of hyperedges. We show that
\textsc{HyTrel} is maximally invariant under certain conditions for tabular data, i.e., two
tables obtain the same representations via \textsc{HyTrel}
\textit{iff} the two tables are identical up to permutations. 
Our empirical results demonstrate that \textsc{HyTrel} \textbf{consistently} outperforms other competitive baselines on four downstream tasks with minimal pretraining, illustrating the advantages of incorporating the inductive biases associated with tabular data into the representations.
% It shows that \textsc{HyTrel} can nearly achieve state-of-the-art performance on four downstream tasks without pretraining (an average of \textbf{4.68\%} absolute improvement over unpretrained baselines), thanks to the incorporating of the invariant table structures. 
% achieves state-of-the-art performance.
%
%In particular, we conduct pretraining with tabular data using two self-supervised
%objectives: (a) a table content based ELECTRA objective, and (b) a table structure
%dependent contrastive objective, and both of them are effective. We conduct
%pretraining with the \textsc{HyTrel} model on tabular data using two self-supervised
%objectives: (a) a table content based ELECTRA objective and, (b) a table structure
%dependent contrastive objective. We empirically observe that these objectives can
%further enhance the performance on our downstream tasks.
%
Finally, our qualitative analyses showcase that \textsc{HyTrel} can assimilate the table structures to generate robust representations for the cells, rows, columns, and the entire table. \footnotemark[1]
% \pp{Bala, is 'theoretically prove' an overstatement? is 4.68 worth mentioning?}
%
\end{abstract}

\renewcommand{\thefootnote}{\fnsymbol{footnote}}
\footnotetext[1]{Work done as an intern at Amazon Web Services.}
\renewcommand{\thefootnote}{\arabic{footnote}}
\footnotetext[1]{Code is available at: \url{https://github.com/awslabs/hypergraph-tabular-lm}}

\section{Introduction}
\label{intro}

% \pp{figure, The figure/table caption should be lower case (except for first word and proper nouns); https://tex.stackexchange.com/questions/503/why-is-preferable-to}

Tabular data that is organized in bi-dimensional matrices are widespread in webpages, documents, and databases. Understanding tables can benefit many tasks such as table type classification, table similarity matching, and knowledge extraction from tables (e.g., column annotations) among others. Inspired by the success of pretrained language models in natural language tasks, recent studies ~\citep{yin20acl,yang-etal-2022-tableformer} proposed Tabular Language Models (TaLMs) that perform pretraining on tables via self-supervision to generate expressive representations of tables for downstream tasks.

% tabular question answering
% tabular question answering, table type recognition, column type classification, and table search etc. 

% in convenience of pretraining the model using masked language objectives. 

Among the TaLMs, many works~\citep{herzig-etal-2020-tapas,yin20acl,deng2020turl,iida-etal-2021-tabbie} serialize tables to a sequence of tokens for leveraging existing pretrained language model checkpoints and textual self-supervised objectives like the Masked Language Modeling. 
% The checkpoints are trained on text using self-supervised objectives like Masked Language Modeling.  
However, due to the linearization of tables to strings, these models do not explicitly incorporate
% are implicitly not equipped to
the structural properties of a table, e.g., the  invariances to arbitrary permutations of rows and columns (independently). 
Our work focuses on obtaining representations of tables that take table structures into account.
% and are invariant to row/column permutations.
% actions as opposed to prior works as 
We hypothesize that incorporating such properties into the table representations will benefit many downstream table understanding tasks.
% where the order of rows and columns are not important.
% focusing on table content which we describe in details in the next sections. 

% Many TaLMs focus on jointly pre-training with the tables and their surrounding context or metadata ~\citep{herzig-etal-2020-tapas,yin20acl,deng2020turl,yang-etal-2022-tableformer} to adapt to downstream joint text and table reasoning and answering tasks (e.g., tabular question answering task). But they underperform in the tasks when only tables are available~\citep{iida-etal-2021-tabbie}. In real applications, many tabular data do not have any surrounding paragraphs as contexts. We aim to learn better representations for such tables exploiting only the table column schema and table content by emphasizing their structure invariant properties.  

\begin{figure}[t]
\begin{center}
\includegraphics[width=320pt]{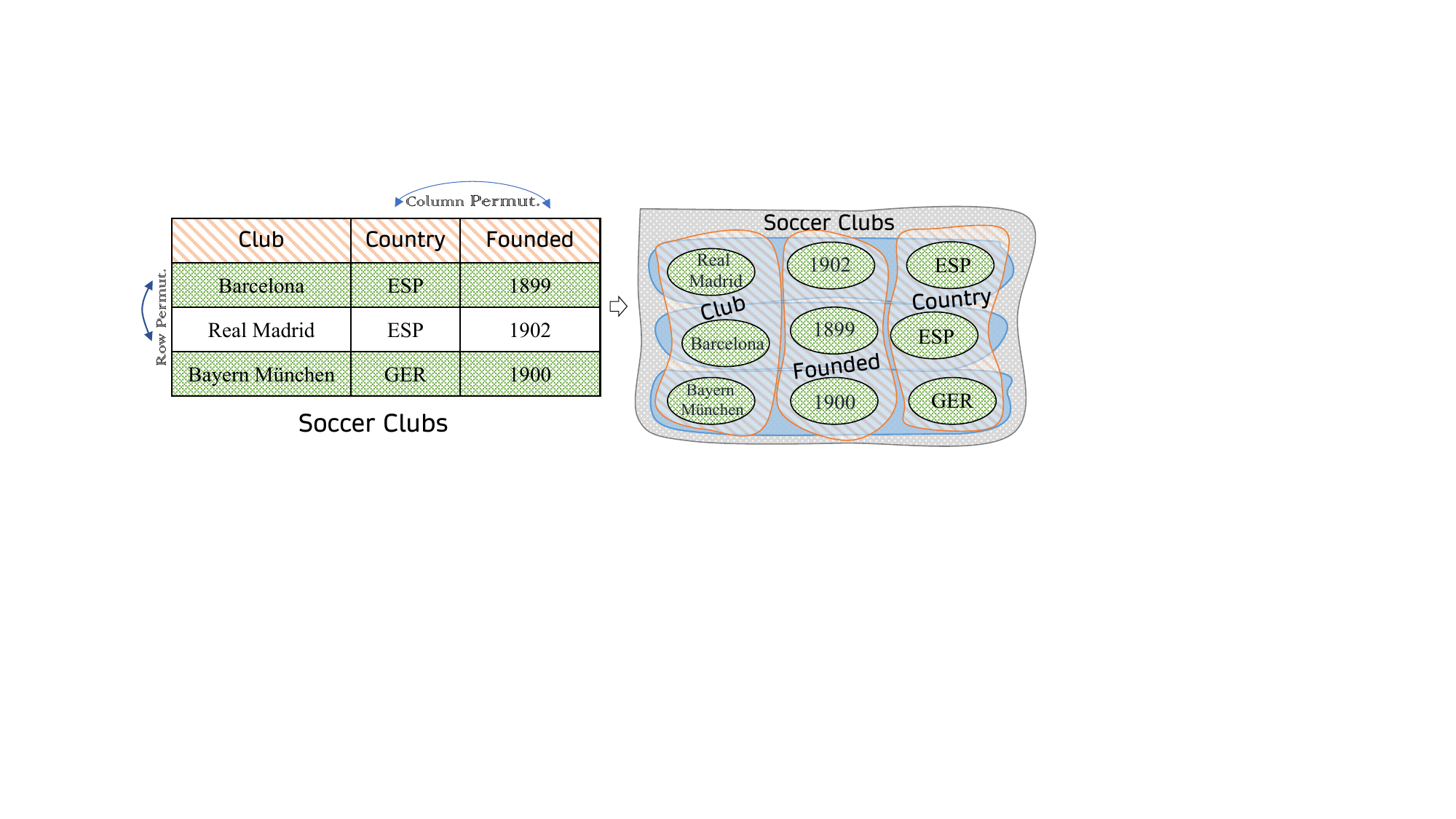}
% \vspace{-5pt}
\caption{\small An example of modeling a table as a hypergraph. Cells make up the nodes and the cells in each row, column, and the entire table form hyperedges. The table caption and the header names are used for the names of the table and column hyperedges. The hypergraph keeps the four structural properties of tables, e.g., the invariance property of the table as the row/column permutations result in the same hypergraph.
% \tikz \draw (0,0) ellipse (5pt and 2.5pt); represents nodes and the closed curves represent hyperedges.
% \pp{should we also add an explanation that even with ordered tables, we can add one row/column to make it permutation invariance? Added in Motivation}
}
% \vspace{-1.2cm}
\label{fig:table2hypergraph}
\end{center}
\end{figure}

{\bf Motivation:} Tabular data is structurally different in comparison to other data modalities such as images, audio, and plain texts. We summarize four \textbf{\textit{structural properties}} present in the tables below:
% \vspace{-0.5cm}

\begin{itemize}[noitemsep, topsep=5pt, leftmargin=*]
  \item Most tables are invariant to row/column permutations. This means, in general, if we arbitrarily (and independently) permute the rows or columns of a table, it is still an equivalent table. For other tables with an explicit ordering of rows or columns, we can make them permutation invariant by appropriately adding a ranking index as a new column or row.   
  % This is specifically true for the majority of web tables that do not contain any sorted information.
  % \item Tables has local similarity within the same row or the same column. Normally, all the cells within one column should belongs to the same data fields (e.g., all are country names); all the cells within the same row are different attributes of one entity (e.g., the population and area of a country). 
  \item Data from a single column are semantically similar -- for example, they oftentimes have the same semantic types. Similarly, the cells within a single row together describe the attributes of a sample within the table and the cells cannot be treated in silos.
  
  % Similarly, the cells within a single row are not independent of each other, and they usually describe the different aspects of one object. 
  % Data from with the same column are oftentimes closely correlated, and oftentimes have the same semantic type. Similarly, cells within a single row are also highly correlated, and tend to represent the different attributes of the same entity.
  \item The interactions within cells/rows/columns are not necessarily pairwise, i.e., the cells within the same row/column, and rows/columns from the same table can have high-order multilateral relations~\citep{chien2022youn}.
  \item Information in tables is generally organized in a hierarchical fashion where the information at the table-level can be aggregated from the column/row-level, and further from the cell-level.
% \vspace{-8pt}
\end{itemize}

% \footnotetext[2]{More explanation can be found in the appendix.}
However, the linearization-based approaches are not designed to explicitly capture most of the above properties.
We aim to address the limitation by modeling all the aforementioned structural properties as inductive biases while learning the table representations.
% especially the invariance property that characterizes the tabular data. 

% of tables into a simple sequence of tokens followed by a subsequent usage of popular language models like BERT \citep{devlin-etal-2019-bert}, 

% Incorporating the aforementioned structural properties mentioned above as inductive biases, while learning representations can help in tasks where table rows/columns are sampled to fit the training sequence and improve training efficiency \citep{lyle2020benefits}.

{\bf Our Approach: } In line with recent studies~\citep{deng2020turl,yang-etal-2022-tableformer,10.1145/3447548.3467434} which have elucidated upon the importance of the structure of a table, we propose the \textsc{HyTrel} that uses hypergraphs to model the tabular data. 
We propose a modeling paradigm that aims \textit{\textbf{capture all of the four properties} }directly. 
% They introduce structural bias into their models using the partially observed properties of tabular data.
Figure~\ref{fig:table2hypergraph} provides an example of how a hypergraph is constructed from a table.
% (The procedure is described more formally in Section~\ref{subsec:construction}).
As observed, converting a table into a hypergraph allows us to incorporate the first two properties inherent to the nature of hypergraphs. Hypergraphs seamlessly allow the model to incorporate row/column permutation invariances, as well as interactions among the cells within the same column or row. 
Moreover, the proposed hypergraph structure can capture the high-order (not just pairwise) interactions for the cells in a column or a row, as well as from the whole table, and an aggregation of hyperedges can also help preserve the hierarchical structure of a table.

% TURL uses a visibility matrix to force the attention to be only on same column/row. TUTA uses 2 tree-structure as coordinates to index each cell as positional encoding. TableFormer uses the data augmentation and row/column permutation to deal with the row/column invariance and a bias term to model the positions of a cell. TABBIE and TCN use two different transformers to force the attentions to be operated on the same row or column independently.    

{\bf Contributions:} Our theoretical analysis and empirical results demonstrate the advantages of modeling the four structural properties. We first show that \textsc{HyTrel} is maximally invariant when modeling tabular data (under certain conditions), i.e. if two tables get the same representations via the hypergraph table learning function, then the tables differ only by row/column permutation (independently) actions and vice versa.
Empirically, we pretrain \textsc{HyTrel} on publicly available tables using two self-supervised objectives: a table content based ELECTRA\footnotemark[2] objective ~\citep{clark2020electra,iida-etal-2021-tabbie} and a table structure dependent contrastive objective ~\citep{wei2022augmentations}. The evaluation of the pretrained \textsc{HyTrel} model on four downstream tasks (two knowledge extraction tasks, a table type detection task, and a table similarity prediction task) shows that \textsc{HyTrel} can achieve state-of-the-art performance. 

We also provide an extensive qualitative analysis of \textsc{HyTrel}--including visualizations that showcase that (a) \textsc{HyTrel} representations are robust to arbitrary permutations of rows and columns (independently), (b) \textsc{HyTrel} can incorporate the hierarchical table structure into the representations, (c) \textsc{HyTrel} can achieve close to state-of-the-art performance even without pretraining, and the model is extremely efficient with respect to the number epochs for pretraining in comparison to prior works, further demonstrating the advantages of \textsc{HyTrel} in modeling the structural properties of tabular data. 
% \textcolor{red}{The analysis also demonstrates that \textsc{HyTrel} can learn table representations that have better \textit{Alignment} (similar samples have similar features/representations) and \textit{Uniformity} (preserving maximal information) properties, which can contribute to performance gains in downstream tasks~\citep{wang2020understanding}.}
In Appendix~\ref{appdx:further_analysis}, we provide additional analysis that demonstrates \textsc{HyTrel}'s ability to handle input tables of arbitrary size and underscore the importance of the independent row/column permutations.

% Subsequently, Experimental results demonstrate that both these pretraining objectives can further enhance the performance on the downstream tasks.

%%\pp{add the remaining two analyses}

\footnotetext[2]{We use the name ELECTRA following ~\citet{iida-etal-2021-tabbie} but it is different from the ELECTRA objective used in language modeling~\citep{clark2020electra}. We do not utilize an auxiliary model to generate corruption.}
% In ELECTRA, we randomly corrupted certain cells or headers of a table and then predict whether each cell is corrupted or not. In the contrastive objective, we corrupted the table structure to generate negative views for contrastive learning. 

% We also visualize the geometry of representations learned from the pretrained models, and we observe that our model . And with the pretraining, the representations can learn the table structure better. We also find that \textsc{HyTrel} is more efficient as it only requires mild pretraining to work well and also gets rid of relying on starting the pretraining from costly existing pretrained language models.  \ruihong{By the last clause, do you mean ``does not require pretraining on tabular data to start with already pretrained large langauge models''}

% (Besides, we demonstrate that our model can achieve better robustness on these tasks when there is permutations of row and columns.)    
% If we model each table as a graph (cells as nodes and same row/column cells are connected), then the first 2 properties can be captured without explicitly mentioning them. 
% Furthermore, if we look a table as a hypergraph, we can easily incorporate all of these 4 properties. 

\section{\textsc{HyTrel} Model}

% Add a notation table if needed

Formally, a table in our work is represented as $\mathcal{T} =[M,H,R]$, where $M$ is the caption, 
% with $k_m$  words $[t_1, t_2, t_3, .... ,t_{k_m}]$.
$H=[h_1, h_2, h_3,...,h_m]$ are the $m$ column headers, 
% Each head $h_i$ has $k_{hi}$ tokens for column description. 
$R$ represents the $n$ rows $[R_1, R_2,R_3,...,R_n]$. Each row $R_i$ has $m$ cells $[c_{i1}, c_{i2}, c_{i3}, ...,c_{im}]$. The caption, header, and cell values can be regarded as sentences that contain several words. 
% In a different notation, 
We note that each cell $c_{ij}$ or header $h_j$ also belongs to the corresponding column $C_j$. 
We use $C=[C_1, C_2, C_3,...,C_m]$ to represent all the columns that include the headers, so a table can also be defined as $\mathcal{T} = [M,C]$.

\begin{figure*}[ht]
\begin{center}
% \vspace{-10pt}
\includegraphics[width=300pt]{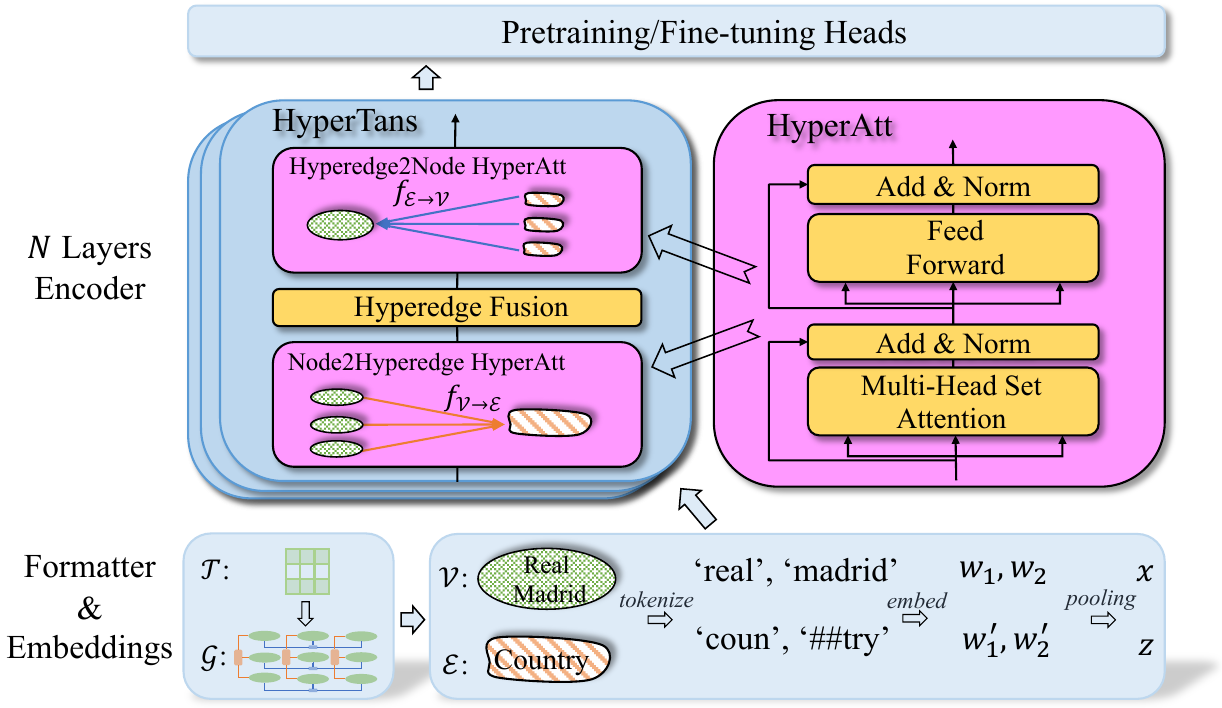}
% \vspace{-5pt}
\caption{\small The overall framework of \textsc{HyTrel}. We first turn a table $\mathcal{T}$ into a hypergraph $\mathcal{G}$ and then initialize the embeddings of the nodes $\mathcal{V}$ and hyperedges $\mathcal{E}$. After that, we encode the hypergraph using stacked multiple-layer hypergraph-structure-aware transformers (\texttt{HyperTrans}). Each \texttt{HyperTrans} layer has two attention blocks that work on hypergraph (\texttt{HyperAtt}) and one \texttt{Hyperedge Fusion} block. Lastly, we use the node and hyperedge representations from the final layer for pretraining and fine-tuning.}
% \vspace{-17pt}
\label{fig:framework}
\end{center}
\end{figure*}

\subsection{Formatter \& Embedding Layer}
\label{subsec:construction}
% \vspace{-7pt}
The formatter transforms a table into a hypergraph. As shown in Figure~\ref{fig:table2hypergraph}, given a table $\mathcal{T}$, we construct a corresponding hypergraph $\mathcal{G}=(\mathcal{V},\mathcal{E})$, where $\mathcal{V}$, $\mathcal{E}$ denote the set of nodes and  hyperedges respectively. 
We treat each cell $c_{ij}$ as a node $v_{ij}\in \mathcal{V}$, and each row $R_i$, each column $C_j$, and the entire table $\mathcal{T}$ as hyperedges $e_i^c, e_j^r, e^t \in \mathcal{E}$ $_{(1\leq i \leq n,1 \leq j \leq m) }$, respectively. As a part of our hypergraph construction, each cell node $v_{ij}$ is connected to 3 hyperedges: its column hyperedge $e_i^c$, row hyperedge $e_j^r$, and the table hyperedge $e^t$. 
The hypergraph can be conveniently be represented as an incidence matrix  $\mathbf{B} \in \{0,1\}^{mn \times (m+n+1)}$, where $\mathbf{B}_{ij} =1$ when node $i$ belong to hyperedge $j$ and $\mathbf{B}_{ij}=0$ otherwise.      

An embedding layer is then employed over the nodes and hyperedges . Each node $v_{ij}$ corresponds to a cell $c_{ij}$ that has several tokens, and we obtain the feature vector $ \mathbf{X}_{v_{ij},:} \in \mathbb{R}^{F}$ for a given node by feeding its constituent cell tokens into the embedding layer and then averaging the embeddings over the tokens. 
% shown in Equation~\ref{eqn:embedding}. 
After obtaining node embeddings $\mathbf{X} \in \mathbb{R}^{nm \times F}$ for all nodes, a similar transformation is applied over hyperedges. 
For different hyperedge types, we use different table content for their initialization : 
for a column hyperedge $e_i^c$, we use all the tokens from the corresponding header $h_i$. 
For the table hyperedge $e^t$, we use the the entire caption associated with $M$. 
For the row hyperedge $e_j^r$, when no semantic information is available, we randomly initialize them as $\mathbf{{S}}_{e^r_j,:} \in \mathbb{R}^{F}$. 
Performing the above operations yields an initialization for all the hyperedge embeddings $\mathbf{S} \in \mathbb{R}^{(m+n+1) \times F}$.

% \subsection{}

% We initialize the representations of the nodes and hyperedges in our hypergraph using the embedding layer. 

% As mentioned in the previous section, 

% \begin{equation} 
% \label{eqn:embedding}
% % \setlength{\abovedisplayskip}{-50pt}
% % \setlength{\belowdisplayskip}{-50pt}
% \mathbf{X}_{v_{ij},:} = \frac{1}{k_{c_{ij}}} \operatorname{Emb}\left( c_{ij} \right)
% % {\vec w_{i}^{gnn}}=\underset{k=1}{\overset{K}{ \|}} \sigma\left(\sum_{j \in \mathcal{N}_{i}} \alpha_{i j}^{k} {W}^{k} \vec w_{j}^{lstm} \right)
% \end{equation}

% We feed the words into a embedding layer $Emb \in \mathcal{R}^{N \times F}$ and get a feature matrix $X^{v_{ij}} \in \mathcal{R}^{k_{c_{ij}} \times F}$. Then we average the words embeddings and get one single feature vector $x^{v_{ij}} \in \mathcal{R}^{F}$ for the node. 

% \begin{equation} 
% % \setlength{\abovedisplayskip}{-50pt}
% % \setlength{\belowdisplayskip}{-50pt}
% s^{e} = \frac{1}{k_{c_{ij}}} \operatorname{Emb}\left( e \right)
% % {\vec w_{i}^{gnn}}=\underset{k=1}{\overset{K}{ \|}} \sigma\left(\sum_{j \in \mathcal{N}_{i}} \alpha_{i j}^{k} {W}^{k} \vec w_{j}^{lstm} \right)
% \end{equation}
% , use the same embedding layer $Emb \in \mathcal{R}^{N \times F}$ and the same average pooling strategy to get the embedding vector $s^{e_i^c} \in \mathcal{R}^{F}$. For the table hyperedge $e^t$, we use the meta data words $M$ as the initial features, and also use the same embedding layer and pooling method to get the embedding vector $s^{e_j^r} \in \mathcal{R}^{F}$. 

\subsection{Hypergraph Encoder}

 %%\pp{just $g$}
After embedding, we propose to use a structure-aware transformer module (\texttt{HyperTrans}) to encode the hypergraphs. \texttt{HyperTrans} encoder can encode the table content, structure, and relations among the table elements (including cells, headers, captions, etc.). As shown in Figure~\ref{fig:framework}, one layer of the \texttt{HyperTrans} module is composed of two hypergraph attention blocks (\texttt{HyperAtt}, $f$)~\citep{,chien2022you}  that interact with the node and hyperedge representations, and one \texttt{Hyperedge Fusion} block. The first \texttt{HyperAtt} block is the \texttt{Node2Hyperedge} attention block as defined below: 

% For a table-transformed hypergraph $\mathcal{G}=(\mathcal{V},\mathcal{E})$ with node embeddings $\mathbf{X} \in \mathbb{R}^{nm \times F}$ and hyperedge embeddings $\mathbf{S} \in \mathbb{R}^{(m+n+1) \times F}$, the Node2Hyperedge HyperAtt module is defined as follows:

% \vspace{-10pt}
\begin{equation} 
\label{eqn:start}
\mathbf{\tilde{S}}_{e,:}^{(t+1)}=f_{\mathcal{V} \rightarrow \mathcal{E}}\left(K_{e, \mathbf{X}^{(t)}} \right)
\end{equation}
% \vspace{-15pt}

Where $f_{\mathcal{V} \rightarrow \mathcal{E}}$ is a hypergraph attention function defined from nodes to hyperedges. $K_{e, \mathbf{X}}=\left\{\mathbf{X}_{v,:}: v \in e\right\}$ denotes the sets of hidden node representations included in the hyperedge $e$. The \texttt{Node2Hyperedge} block will aggregate information to hyperedge $e$ from its constituent nodes $v \in e$.

We then use a \texttt{Hyperedge Fusion} module (a Multilayer Perceptron Network, $\operatorname{MLP}$) to propagate the hyperedge information from the last step, as defined below:

% \vspace{-12pt}
\begin{equation} 
\label{eq:2}
\mathbf{S}_{e,:}^{(t+1)}=\operatorname{MLP}\left(\mathbf{S}_{e,:}^{(t)} ;\mathbf{\tilde{S}}_{e,:}^{(t+1)}\right)
\end{equation}
% \vspace{-12pt}

A second \texttt{HyperAtt} block \texttt{Hyperedge2Node} then aggregates information from a hyperedge to its constituent nodes as follows:

% \vspace{-18pt}
\begin{equation}
 \mathbf{X}_{v,:}^{(t+1)}=f_{\mathcal{E} \rightarrow \mathcal{V}}\left(L_{v, \mathbf{S}^{(t+1)}} \right)
\end{equation}
% \vspace{-12pt}

Where $f_{\mathcal{E} \rightarrow \mathcal{V}}$ is another hypergraph attention function defined from hyperedges to nodes. $L_{v, \mathbf{S}}=\left\{\mathbf{S}_{e,:}: v \in e\right\}$ is defined as the sets of hidden representations of hyperedges that contain the node $v$.

As for the \texttt{HyperAtt} block $f$, similar to transformer~\citep{NIPS2017_3f5ee243}, it is composed of one multi-head attention, one Position-wise Feed-Forward Network ($\operatorname{FFN}$), two-layer normalization ($\operatorname{LN}$)~\citep{Ba2016LayerN} and two skip connections~\citep{7780459}, as in Figure~\ref{fig:framework}. However, we do not use the self-attention~\citep{NIPS2017_3f5ee243} mechanism from the transformer model because it is not designed to keep the invariance structure of tables or hypergraphs. Inspired by the deep set models~\citep{NIPS2017_f22e4747,lee2019set}, we use a set attention mechanism that can 
% naturally incorporate the table and hypergraph structure into the attention mechanism, and 
keep the permutation invariance of a table.  We define \texttt{HyperAtt} $f$ as follows:

% Then after one layer of \texttt{HyperTrans} module, we can update all the table element representations, from cells to columns, rows and the table itself.  

% \vspace{-10pt}
\begin{equation}
f_{\mathcal{V} \rightarrow \mathcal{E} \; \text{or} \; \mathcal{E} \rightarrow \mathcal{V}}(\mathbf{I}):=\operatorname{L N}(\mathbf{Y}+\operatorname{FFN}(\mathbf{Y})) 
\end{equation}
% \vspace{-15pt}

% $$
% f_{\mathcal{V} \rightarrow \mathcal{E}}(\mathbf{I}):=f_{\mathcal{E} \rightarrow \mathcal{V}}(\mathbf{I}):=\operatorname{L N}(\mathbf{Z}+\operatorname{FFN}(\mathbf{Z})) 
% $$

Where $\mathbf{I}$ is the input node or hyperedge representations. The intermediate representations $\mathbf{Y}$ is obtained by:

% \vspace{-18pt}
\begin{equation}
\label{eqn:end}
\mathbf{Y}=\operatorname{LN}\left(\omega+\operatorname{SetMH}(\omega, \mathbf{I}, \mathbf{I})\right)
\end{equation}
% \vspace{-12pt}

Where $\operatorname{SetMH}$ is the multi-head set attention mechanism defined as: 
% \vspace{-10pt}
\begin{equation}
\operatorname{SetMH}(\omega, \mathbf{I}, \mathbf{I})=\|_{i=1}^h \mathbf{O}_{i}  
\end{equation}

% \vspace{-15pt}
and 
% \vspace{-10pt}
\begin{equation}
\mathbf{O}_{i}=\operatorname{Softmax}\left(\omega_{i}\left( \mathbf{I} \mathbf{W}_{i}^{K} \right)^T\right) \left( \mathbf{I} \mathbf{W}_{i}^{V}\right)
\end{equation}

Where $\omega$ is a learnable weight vector as the query and $\omega := \|_{i=1}^h \omega_{i}$, $\mathbf{W}_{i}^{K}$ and $\mathbf{W}_{i}^{V}$ are the weights for the key and value projections, $\|$ means concatenation.

So the \texttt{HyperTrans} module will update node and hyperedge representations alternatively. This mechanism enforces the table cells to interact with the columns, rows, and the table itself. Similar to BERT$_{base}$~\citep{devlin-etal-2019-bert} and  TaBERT$_{base}$~\citep{yin20acl} , we stack $12$ layers of \texttt{HyperTrans}.

% \subsection{Theoretical Justification}
% Why we model a table as a hypergraph?

% Theoretical background (maximally invariant, I.e. if two tables get the same representation via the hypergraph tab learning function, then the tables are the same and vice versa).

% \vspace{-10pt}
\subsection{Invariances of the \textsc{HyTrel} Model}
% \vspace{-7pt}
% \bala{Define what is $\phi$ and what is $g$ at the beginning to make it easier for the reader. Connect maximally invariant to excessive invariance as a second step}

Let $\phi:\mathcal{T} \mapsto \mathbf{z} \in \mathbb{R}^d$ be our target function which captures the desired row/column permutation invariances of tables (say for tables of size $n \times m$).
Rather than working on the table $\mathcal{T}$ directly, 
the proposed \textsc{HyTrel} model works on a hypergraph (via Eqns~(\ref{eqn:start}-\ref{eqn:end})) that has an incidence matrix $\mathbf{B}$ of size ${mn \times (m+n+1)}$.
Correspondingly, we shall refer to \textsc{HyTrel} as a function $g:\mathbf{B} \mapsto \mathbf{y} \in \mathbb{R}^k$.

In this section we will make the connections between the properties of the two functions $\phi$ and $g$, demonstrating a maximal invariance between the two--as a result of which we prove that our \textsc{HyTrel} can also preserve the permutation invariances of the tables.
% Besides, modeling the maximal invariance of the two functions equivalently conveys that capturing the permutation invariances of the larger incidence matrix does not make our model excessively invariant.
% \bala{First add connection between incidence matrix to the table itself - both are matrices of different sizes. Next their sizes are different - so there could be a problem of excessive invariances - so we make sure that is not the case here}
First, we list our assumptions and resultant properties of tabular data. 
Subsequently, we present the maximal invariance property of $\phi$ and our hypergraph-based learning framework $g$.
As a part of our notation, we use $[n]$ to denote $\{1,2,\ldots, n\}$.
Preliminaries and all detailed proofs are presented in the Appendix~\ref{sec:prelims} and~\ref{sec:proofs} respectively.

\begin{assumption}
\label{assumption:septable}
For any table $(\gT_{ij})_{i \in [n], j \in [m]}$ (where $i, j$ are indexes of the rows, columns), an arbitrary group action $a \in \mathbb{S}_n \times \mathbb{S}_m$ acting appropriately on the rows and columns leaves the target random variables associated with tasks on the entire table unchanged.
\end{assumption}

This assumption is valid in most real-world tables -- as reordering the columns and the rows in the table oftentimes doesn't alter the properties associated with the entire table (e.g. name of the table, etc). 
As noted earlier, for tables with an explicit ordering of rows or columns, we can make them permutation invariant by adding a ranking index as a new column or row appropriately.
To model this assumption, we state a property required for functions acting on tables next.

% \bala{Add a statement to remind the reader that we can add an index when the invariances of the the table don't hold}

\begin{property}
\label{pro:1}
A function $\phi:\mathcal{T} \mapsto \mathbf{z} \in R^d$ which satisfies Assumption \ref{assumption:septable} and defined over tabular data must be invariant to actions from the (direct) product group $\mathbb{S}_n \times \mathbb{S}_m$ acting appropriately on the table i.e. $\phi(a \cdot \gT) = \phi(\gT)\;\; \forall a \in \mathbb{S}_n \times \mathbb{S}_m$.
\end{property}

% \begin{property}
% A function $g:\mathbf{B} \to \mathbf{Z} \in R^d$ which satisfies Assumption \ref{assumption:sephypergraph} and acting on the incidence matrix $\mathbf{B}$ of a hypergraph $G$, must be invariant to the ordering of the rows (any permutation action on vertices) and columns (Any permutation action on the edge set).
% \end{property}

% \bala{Write a statement to say that \textsc{HyTrel} is invariant to $\mathbb{S}_{mn}\times \mathbb{S}_{m+n+1}$}

However, \textsc{HyTrel} (or the function $g$ via hypergraph modeling) through Eqns~(\ref{eqn:start}-\ref{eqn:end})) models invariances of the associated incidence matrix to the product group $\mathbb{S}_{mn}\times \mathbb{S}_{m+n+1}$ (proof presented in the appendix).
To make the connection between the two, we present the maximal invariance property of our proposed \textsc{HyTrel} model.
% , thereby ensuring that our model is not excessively invariant.

\begin{theorem}
\label{theorem}
A  continuous function $\phi:\mathcal{T} \mapsto \mathbf{z} \in \mathbb{R}^d$ over tables is maximally invariant when modeled as a function $g:\mathbf{B} \mapsto \mathbf{y} \in \mathbb{R}^k$ over the incidence matrix of a hypergraph $\gG$ constructed per Section~\ref{subsec:construction} (Where $g$ is defined via Eqns~(\ref{eqn:start}-\ref{eqn:end})) if $\exists$ a bijective map between the space of tables and incidence matrices (defined over appropriate sizes of tables, incidence matrices). 
That is, $\phi(\mathcal{T}_1) = \phi(\mathcal{T}_2)$ iff $\mathcal{T}_2$ is some combination of row and/or column permutation of $\mathcal{T}_1$ and $g(\mathbf{B_1}) = g(\mathbf{B_2})$ where $\mathbf{B}_1, \mathbf{B}_2$ are the corresponding (hypergraph) incidence matrices of tables $\mathcal{T}_1, \mathcal{T}_2$.
\end{theorem}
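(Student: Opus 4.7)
The plan is to prove the theorem by establishing the biconditional chain $\phi(\gT_1)=\phi(\gT_2) \Leftrightarrow \gT_2 = a\cdot \gT_1 \text{ for some } a \in \mathbb{S}_n \times \mathbb{S}_m \Leftrightarrow g(\mathbf{B}_1)=g(\mathbf{B}_2)$. The middle statement is taken as the ground truth notion of table equivalence, so I would prove the two equivalences separately: one showing that $\phi$ is maximally invariant under the product group action on tables, and one showing that $g$ is maximally invariant under the induced subgroup action on incidence matrices, and then transport one into the other via the bijection hypothesized in the statement.

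First, I would note that the easy direction of each equivalence is essentially given: Property~\ref{pro:1} yields $\gT_2 = a\cdot \gT_1 \Rightarrow \phi(\gT_1)=\phi(\gT_2)$ immediately. For the analogous direction on the $g$ side, I would show that $g$ as constructed via Equations~(\ref{eqn:start})-(\ref{eqn:end}) is invariant under the subgroup of $\mathbb{S}_{mn}\times \mathbb{S}_{m+n+1}$ that is induced by a row/column permutation of the table (i.e.\ permuting row-blocks of cell nodes together with permuting row-hyperedges, and similarly for columns, fixing the table-hyperedge). The argument is that each \texttt{HyperAtt} block in Equation~(\ref{eqn:end}) aggregates a set of node or hyperedge representations via softmax attention with a learnable query $\omega$; such set attention is permutation invariant in its input set, and the \texttt{Hyperedge Fusion} MLP in Equation~(\ref{eq:2}) acts pointwise, preserving equivariance. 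Composing these across the stacked layers gives that $g$ is invariant under the induced subgroup, and a final invariant pooling produces $\mathbf{y}$.

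The substantive direction is maximality on the $g$ side: $g(\mathbf{B}_1)=g(\mathbf{B}_2) \Rightarrow$ $\mathbf{B}_1, \mathbf{B}_2$ lie in the same orbit of the induced subgroup. Here I would appeal to universality of set-attention encoders (Deep Sets~\citep{NIPS2017_f22e4747}, Set Transformer~\citep{lee2019set}) applied to the two-phase bipartite message passing between nodes and hyperedges. The key observation is that the three hyperedge types (column, row, table) carry distinct initial embeddings $\mathbf{S}$ drawn from distinct sources (headers, random row tokens, caption), which breaks spurious symmetries and restricts the effective automorphism group of the hypergraph to exactly the induced image of $\mathbb{S}_n \times \mathbb{S}_m$. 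Combined with universal set-function approximation, this yields that, for some choice of parameters, $g$ separates distinct orbits; equivalently, non-isomorphic table-hypergraphs produce distinct $\mathbf{y}$. Chaining with the assumed bijection between tables (mod $\mathbb{S}_n \times \mathbb{S}_m$) and incidence matrices (mod the induced subgroup) transports the maximal invariance of $g$ back to $\phi$, closing the biconditional.

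The hard part will be the expressiveness step: rigorously showing that the alternating \texttt{Node2Hyperedge}/\texttt{Hyperedge2Node} layers with set multi-head attention are powerful enough to distinguish every pair of non-isomorphic table-hypergraphs. This parallels expressiveness results for bipartite/hypergraph message passing and Weisfeiler-Leman-style separation, but with the caveat that the set-attention aggregator here uses a fixed learnable query rather than fully element-wise attention. I would handle this by invoking the universality of such pooling operators on multisets of bounded cardinality, and by showing that the three distinguishable hyperedge-type initializations eliminate any residual symmetry beyond the intended $\mathbb{S}_n\times\mathbb{S}_m$ action, so that orbit separation is well-defined and attainable within the function class realized by \texttt{HyperTrans}.
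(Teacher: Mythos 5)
Your overall skeleton matches the paper's: the easy directions via invariance (Property~\ref{pro:1} on the table side, permutation invariance of the set-attention aggregators and the pointwise \texttt{Hyperedge Fusion} MLP on the incidence-matrix side), and the substance concentrated in the separation/maximality direction, transported between the two spaces by the hypothesized bijection. However, there is a genuine gap in how you propose to close that hard direction. Universality of Deep Sets / Set Transformer style aggregators gives you injectivity of a \emph{single} multiset aggregation step on bounded multisets; it does not give you that the \emph{composition} of such steps over the node--hyperedge incidence structure separates non-isomorphic inputs. Iterated bipartite message passing of this form is provably no more powerful than the 1-Weisfeiler-Lehman test on the star expansion of the hypergraph, and 1-WL does not distinguish all non-isomorphic graphs in general. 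So your conclusion ``for some choice of parameters, $g$ separates distinct orbits'' does not follow from universality plus the symmetry-breaking of the three hyperedge-type initializations; you still need an argument that 1-WL (equivalently, the injective message passing of \texttt{HyperTrans}) is \emph{complete} on the specific class of table-derived hypergraphs, or some other route to isomorphism.

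The paper closes exactly this hole differently: it uses the bijective-map assumption to obtain a bijection between the families of stars of the two hypergraphs (Lemma~1 of \citep{tyshkevich1996line}), argues that the injective AllSet-style updates make the star expansions 1-WL isomorphic, and then combines the two facts via Theorem~2 of \citep{srinivasan2021learning} to conclude that the hypergraphs themselves are isomorphic, hence the tables identical up to independent row/column permutation. Your observation that the distinct column/row/table hyperedge initializations restrict the effective automorphism group to the image of $\mathbb{S}_n \times \mathbb{S}_m$ is a useful ingredient (and is implicit in the paper's construction), but it addresses \emph{which} group you are quotienting by, not \emph{whether} the architecture separates distinct orbits of that group. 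To repair your proof you would need to either import a reconstruction result of the family-of-stars type, or directly prove 1-WL completeness on grid-structured incidence matrices (each cell node in exactly one row hyperedge, one column hyperedge, and the table hyperedge), neither of which follows from set-function universality alone.
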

{\em Proof Sketch}: Detailed proof is provided in \Cref{sec:proofs}. The above theorem uses Lemma 1 from \citep{tyshkevich1996line} and applies the Weisfeiler-Lehman test of isomorphism over the star expansion graphs of the hypergraphs toward proving the same.

As a consequence of Theorem~\ref{theorem}, two tables identical to permutations will obtain the same representation, which has been shown to improve generalization performance \citep{lyle2020benefits}.
% In Appendix~\ref{sec:proofs}, we also remark upon the universality property of \textsc{HyTrel}, albeit when certain additional conditions are satisfied.

% \vspace{-10pt}
\subsection{Pretraining Heads}
% \vspace{-7pt}
\noindent {\bf ELECTRA Head}: In the ELECTRA pretraining setting, we first corrupt a part of the cells and the headers from a table and then predict whether a given cell or header has been corrupted or not~\citet{iida-etal-2021-tabbie}. Cross-entropy loss is used together with the binary classification head.

% For each cell  $c_{ij}$ and each header $h_i$, a corresponding label $y \in \{0,1\}$ indicates whether it has been replaced or not during the preprocessing. 
% We use the Cross-Entropy loss for backpropagation. 

% from the node representations, and we use the final column hyperedge representations as the header representations. So we can get a corresponding final representation for a cell or a header . 

% \begin{equation}
% loss^{E}=\operatorname{CrossEntropy}\left( \sigma(\boldsymbol{w}^{\top} \boldsymbol{x}^T),y\right)
% \end{equation}

% where $\boldsymbol{x}$  is the hidden representation for a cell or a column from the corresponding node or hyperedge representations. $\boldsymbol{w}$ is the prejection head and $\sigma$ is the sigmoid function. 

% After the stacked HyperTans encoder, we will get the hidden representations $\boldsymbol{x}$ of the cells and columns from the corresponding node and hyperedge representations.  We calculate the logit of that cell or header been corrupted or not using a projection head $\boldsymbol{w}$, and then use a Cross-Entropy loss for backpropagation. $\sigma$ is the sigmoid function.   

\noindent {\bf Contrastive Head}: In the contrastive pretraining setting, we randomly corrupt a table-transformed hypergraph by masking a portion of the connections between nodes and hyperedges, as inspired by the hypergraph contrastive learning~\citep{wei2022augmentations}. 
For each hypergraph, we corrupt two augmented views and use them as the positive pair, and use the remaining in-batch pairs as negative pairs. 
Following this, we contrast the table and column representations from the corresponding hyperedges. The InfoNCE~\citep{DBLP:journals/corr/abs-1807-03748} objective is used for optimization as in~\ref{eq:InfoNCE}.

% For computing efficiency, we only contrast the table representation (last hidden representation for table hyperedge $e^t$) and the column representation (last hidden representation for the column hyperedges $e_i^c$). 

% \vspace{-10pt}
\begin{equation}
\label{eq:InfoNCE}
loss =-\log \frac{\exp \left(\boldsymbol{q} \cdot \boldsymbol{k}_{+} / \tau\right)}{\sum_{i=0}^K \exp \left(\boldsymbol{q} \cdot \boldsymbol{k}_i / \tau\right)}
\end{equation}
% \vspace{-10pt}

where $(\boldsymbol{q}, \boldsymbol{k}_{+})$ is the positive pair, and $\tau$ is a temperature hyperparameter.  

% \vspace{-10pt}
\section{Experiments}
\label{sec:exp}
% \vspace{-10pt}

\subsection{Pre-training} 
% \vspace{-7pt}

\noindent {\bf Data} In line with previous TaLMs~\citep{yin20acl,iida-etal-2021-tabbie}, we use tables from Wikipedia and Common Crawl for pretraining. 
We utilize preprocessing tools provided by~\citet{yin20acl} and collect a total of 27 million tables (1\% are sampled and used for validation).\footnotemark[3]  During pretraining, we truncate large tables and retain a maximum of 30 rows and 20 columns for each table, with a maximum of 64 tokens for captions, column names, and cell values. 
It is important to note that the truncation is solely for efficiency purposes and it does not affect \textsc{HyTrel}'s ability to deal with large tables, as elaborated in appendix~\ref{appdx:tab_size}.
% \textsc{HyTrel} can deal with tables of arbitrary sizes.
% as it only aims to preserve the table invariance property and does not have the positional bias as in the BERT-based TaLMs.                  

\footnotetext[3]{As the version of Wikipedia used by~\citep{yin20acl} is not available now, we use an updated version so we collect slightly more tables than previous TaLMs.} 

\noindent {\bf Settings} With the ELECTRA pretraining objective, we randomly replace 15\% of the cells or headers of an input table with values that are sampled from all the pretraining tables based on their frequency, as recommended by~\citet{iida-etal-2021-tabbie}. 
% We distinguish the sampling pools for headers and cells. 
With the contrastive pretraining objective, we corrupted 30\% of the connections between nodes and hyperedges for each table to create one augmented view. The temperature $\tau$ is set as 0.007. For both objectives, we pretrain the \textsc{HyTrel} models for 5 epochs. More details can be found the Appendix~\ref{appdx:pretrain}.

\subsection{Fine-tuning\protect\footnotemark[4]}
% \vspace{-7pt}
\footnotetext[4]{More details about experimental settings, the datasets, and the baselines can be found the Appendix~\ref{appdx:fine-tuning}}

After pretraining, we use the \textsc{HyTrel} model as a table encoder to fine-tune downstream table-related tasks. In order to demonstrate that our model does not heavily rely on pretraining or on previous pretrained language models, we also fine-tune the randomly initialized \textsc{HyTrel} model for comparison. In this section, we introduce the evaluation tasks and the datasets. We choose the following four tasks that rely solely on the table representations since we want to test the task-agnostic representation power of our model and avoid training separate encoders for texts (e.g., questions in table QA tasks) or decoders for generations.
As mentioned, our encoder can be used in all these scenarios and we leave its evaluation in other table-related tasks as future work.

\noindent {\bf Column Type Annotation } (CTA) task aims to annotate the semantic types of a column and is an important task in table understanding which can help many knowledge discovery tasks such as entity recognition and entity linking. We use the column representations from the final layer of  \textsc{HyTrel} with their corresponding hyperedge representations for making predictions. We evaluate \textsc{HyTrel} on the TURL-CTA dataset constructed by~\citet{deng2020turl}.

% \noindent {\bf T2Dv2 and Efthymiou Datasets} We also evaluate \textsc{HyTrel} on two small datasets. T2Dv2 is from the general Web with 411 columns annotated by \citet{ijcai2019p0289} with 37 classes. Efthymiou~\citep{Efthymiou2017MatchingWT} dataset is a subset of WikiGS that comes from the Wikipedia encyclopedia. \citet{ijcai2019p0289} annotated 620 columns of it with 31 types out of the previous 37 classes. We follow previous work~\citep{ijcai2019p0289,deng2020turl}. and use the same 70\% of T2Dv2 data for training and the rest 30\% data and the whole Efhymiou dataset for testing.       

\noindent {\bf Column Property Annotation} (CPA) task aims to map column pairs from a table to relations in knowledge graphs. It is an important task aimed at extracting structured knowledge from tables. We use the dataset TURL-CPA constructed by ~\citet{deng2020turl} for evaluation. 

\noindent {\bf Table Type Detection} (TTD) task aims to annotate the semantic type of a table based on its content. We construct a dataset using a subset from the public~\href{http:\\webdatacommons.org/structureddata/schemaorgtables/}{WDC Schema.org Table Corpus}.
% for evaluation. 

% \footnotetext[4]{http://webdatacommons.org/structureddata/schemaorgtables/} 

\noindent {\bf Table Similarity Prediction} (TSP) task aims at predicting the semantic similarity between tables and then classifying a table pair as similar or dissimilar. We use the PMC dataset proposed by~\citet{tabsim_inproceedings} for evaluation.

\begin{table*}[h] 
\small

\setlength{\tabcolsep}{5pt}
    \small
    \centering
    % \vspace{-5pt}
    \begin{tabular}{l|c|c }
    \toprule
    \toprule
    % \begin{tabular}[l] {@{}l@{}} Method \end{tabular} & 
    \multirow{1}{*}{ Systems } & Column Type Annotation  & Column Property Annotation  \\
    % \begin{tabular}[c] {@{}c@{}} Datasets \end{tabular}\\
    % \midrule
    % \cmidrule{2-3}
    % \begin{tabular}[l] {@{}l@{}}  \end{tabular} &  \\
    % \begin{tabular}[c] {@{}c@{}} TURL-CTA \end{tabular} & 
    % \begin{tabular}[c] {@{}c@{}} TURL-CPA \end{tabular} \\
    % \begin{tabular}[c] {@{}c@{}} F1 \end{tabular}  \\
    % \midrule
% Previous reported results??  \\
    \midrule

Sherlock & 88.40 / 70.55 / 78.47  &   -  \\ 
% \citet{ijcai2019p0289} & - & 96.6   \\
BERT$_{base}$ & - & 91.18 / 90.69 / 90.94 \\

% TURL &  89.91 / 90.58 / 90.24 &      & \\ 
TURL + metadata &   92.75 / 92.63 / 92.69 &  92.90 / 93.80 / 93.35  \\ 

Doduo + metadata & 93.25 / 92.34 / 92.79 & 91.20 / 94.50 / 92.82  \\
\midrule

% TaBERT$_{base}$\textit{(K=1)} \textit{w/o} Pretrain   & 90.08 /	85.50 /	87.73 & 90.26 /	68.05 /	77.60   \\

% TaBERT$_{base}$\textit{(K=3)} \textit{w/o} Pretrain   & 90.67 /	86.92 /	88.75 & 90.06 / 85.27 / 87.60   \\

TaBERT$_{base}$\textit{(K=1)} &  91.40$_{\pm\text{0.06}}$ / 89.49$_{\pm\text{0.21}}$ / 90.43$_{\pm\text{0.11}}$ &  92.31$_{\pm\text{0.24}}$ /	90.42$_{\pm\text{0.53}}$ /	91.36$_{\pm\text{0.30}}$    \\ 

{\hspace{1em} \textit{w/o} Pretrain }   & 90.00$_{\pm\text{0.14}}$ / 85.50$_{\pm\text{0.09}}$ / 87.70$_{\pm\text{0.10}}$ & 89.74$_{\pm\text{0.40}}$ /	68.74$_{\pm\text{0.93}}$ /	77.84$_{\pm\text{0.64}}$   \\

TaBERT$_{base}$\textit{(K=3)} & 91.63$_{\pm\text{0.21}}$ /	91.12$_{\pm\text{0.25}}$ /	91.37$_{\pm\text{0.08}}$ & 92.49$_{\pm\text{0.18}}$ /	92.49$_{\pm\text{0.22}}$ /	92.49$_{\pm\text{0.10}}$  \\

{\hspace{1em} \textit{w/o} Pretrain }   & 90.77$_{\pm\text{0.11}}$ / 87.23$_{\pm\text{0.22}}$ / 88.97$_{\pm\text{0.12}}$ & 90.10$_{\pm\text{0.17}}$ /	84.83$_{\pm\text{0.89}}$ /	87.38$_{\pm\text{0.48}}$   \\

% NCRF++                     & 83.40{\small{$\pm$0.34}} / 76.96{\small{$\pm$0.46}} /  80.05{\small{$\pm$0.12}} &	91.28{\small{$\pm$1.08}} / 80.57{\small{$\pm$0.55}} /  85.59{\small{$\pm$0.23}}\\ % 0.53	0.49	0.33   \\ 
%  FLAIR        & 81.07{\small{$\pm$0.29}}	/ 75.28{\small{$\pm$0.57}}	/  78.06{\small{$\pm$0.39 }} &	 90.67{\small{$\pm$1.02}} / 81.45{\small{$\pm$1.41}} /  85.81{\small{$\pm$0.62}}\\ % 0.29	0.57	0.39   \\
    
% Pooled FLAIR  & 82.11{\small{$\pm$0.50}}	/ 77.55{\small{$\pm$0.40}}	/  79.76{\small{$\pm$0.34 }} & 87.79{\small{$\pm$1.31}}	/ 86.57{\small{$\pm$1.10}}	/  87.17{\small{$\pm$0.17 }}	\\ % 0.50	0.40	0.34   \\

% Tuning Bio/SciBERT     & 83.94{\small{$\pm$0.40}} / 83.12{\small{$\pm$0.30}} /  83.53{\small{$\pm$0.32}} & 90.93{\small{$\pm$0.66}} / 88.99{\small{$\pm$1.61}} /  89.95{\small{$\pm$0.64}}	\\

\midrule

  \textsc{HyTrel} \textit{w/o} Pretrain           & {\bf 92.92}$_{\pm\text{0.11}}$ / 92.50$_{\pm\text{0.10}}$ / 92.71$_{\pm\text{0.08}}$	& 92.85$_{\pm\text{0.35}}$ / 91.50$_{\pm\text{0.54}}$ / 92.17$_{\pm\text{0.38}}$    \\

\textsc{HyTrel} \textit{w/} ELECTRA       &  92.85$_{\pm\text{0.21}}$ / {\bf 94.21}$_{\pm\text{0.09}}$ / {\bf 93.53}$_{\pm\text{0.10}}$ & 92.88$_{\pm\text{0.24}}$ / {\bf 94.07}$_{\pm\text{0.27}}$ / {\bf 93.48}$_{\pm\text{0.12}}$  \\ 

\textsc{HyTrel} \textit{w/} Contrastive       &  92.71$_{\pm\text{0.20}}$ / 93.24$_{\pm\text{0.08}}$ / 92.97$_{\pm\text{0.13}}$ & {\bf 93.01}$_{\pm\text{0.40}}$ / 93.16$_{\pm\text{0.40}}$  / 93.09$_{\pm\text{0.17}}$     \\ 
         \bottomrule
         \bottomrule

\end{tabular}

% \caption{Test results of baselines and our system on CTA and CPA tasks (Precision/Recall/F1$_{\pm}$\textsubscript{standard deviation} Scores,\%). The results of our HyeprTab model are from the average of 5 system runs with different random seeds. }

% \vspace{-5pt}
\caption{\small Test results on the CTA and CPA tasks (Precision/Recall/F1 Scores,\%). The results of TaBERT and \textsc{HyTrel} are from the average of 5 system runs with different random seeds. For fair comparisons, we use the results of TURL and Doduo with metadata, i.e., captions and headers.}
    \label{tab:column}
% \vspace{-10pt}
\end{table*}

\begin{table*}[t] 
\small

\setlength{\tabcolsep}{5pt}
    \small
    \centering
    \begin{tabular}{l|c|cc }
    \toprule
    \toprule
    % \begin{tabular}[l] {@{}l@{}} Method \end{tabular} & 
    \multirow{2}{*}{ Systems } & Table Type Detection  & \multicolumn{2}{c}{Table Similarity Prediction}  \\
    % \begin{tabular}[c] {@{}c@{}} Datasets \end{tabular}\\
    % \midrule
    \cmidrule{2-4}
    \begin{tabular}[l] {@{}l@{}} \end{tabular} & Accuracy  &
    \begin{tabular}[c] {@{}c@{}} Precision/Recall/F1 \end{tabular} & 
    \begin{tabular}[c] {@{}c@{}} Accuracy \end{tabular} \\
    % \begin{tabular}[c] {@{}c@{}} F1 \end{tabular}  \\
    % \midrule
% Previous reported results??  \\
    \midrule

TFIDF+Glove+MLP & - &87.36 / 83.81 / 84.47  & 85.06    \\
TabSim & - & 88.65 / 85.45 / 86.13 & 87.05 \\ 

\midrule

% TaBERT$_{base}$\textit{(K=1)} \textit{w/o} Pretrain   & &	32.03 /	50.00 /	39.05 & 64.06  \\

% TaBERT$_{base}$\textit{(K=3)} \textit{w/o} Pretrain   & &	80.91 /	80.21 /	80.56 &  82.04 \\

TaBERT$_{base}$\textit{(K=1)} & 93.11$_{\pm\text{0.31}}$ & 		87.04$_{\pm\text{0.64}}$ / 85.34$_{\pm\text{0.93}}$ /	86.18$_{\pm\text{1.13}}$ & 87.35$_{\pm\text{1.42}}$  \\ 

{\hspace{1em} \textit{w/o} Pretrain }   & 85.04$_{\pm\text{0.41}}$ &	33.61$_{\pm\text{12.70}}$ /	50.31$_{\pm\text{12.75}}$ /	40.30$_{\pm\text{12.03}}$  & 63.45$_{\pm\text{10.11}}$  \\

TaBERT$_{base}$\textit{(K=3)} & 95.15$_{\pm\text{0.14}}$  &	87.76$_{\pm\text{0.64}}$ / 86.97$_{\pm\text{0.59}}$ / 87.36$_{\pm\text{0.95}}$  & 88.29$_{\pm\text{0.98}}$  \\ 

{\hspace{1em} \textit{w/o} Pretrain }   & 89.88$_{\pm\text{0.26}}$ &	82.96$_{\pm\text{1.84}}$ /	81.16$_{\pm\text{1.45}}$ /	82.05$_{\pm\text{1.02}}$ &  82.57$_{\pm\text{1.20}}$ \\

% TaBERT$_{base}$\textit{(K=1)}  & 	87.24 /	86.46 /	86.85 / 87.75   \\

% {\hspace{1em} \textit{w/o} Pretrain }  & 		32.03 /	50.00 /	39.05 / 64.06  \\

% TaBERT$_{base}$\textit{(K=3)}    & 	 88.42 /	86.68 /	87.54 / 88.63	 \\

% {\hspace{1em} \textit{w/o} Pretrain }   & 	 80.91 /	80.21 /	80.56 /  82.04	 \\

% TaBERT$_{base}$\textit{(K=1)} &   			87.24 /	86.46 /	86.85 / 87.75  \\ 

% TaBERT$_{base}$\textit{(K=3)} &   		88.42 /	86.68 /	87.54 / 88.63  \\ 

% TaBERT$_{base}$\textit{(K=3)} & 91.09 / 90.24 / 90.55 / 90.67   \\ %from StruBERT paper, one run. 
% StruBERT &  \\ 
\midrule

\textsc{HyTrel} \textit{w/o} Pretrain           & 93.84$_{\pm\text{0.17}}$ &	88.94$_{\pm\text{1.83}}$ /	85.72$_{\pm\text{1.52}}$ /	87.30$_{\pm\text{1.02}}$ & 88.38$_{\pm\text{1.43}}$  \\

\textsc{HyTrel} \textit{w/} ELECTRA       &  {\bf 95.81}$_{\pm\text{0.19}}$ &	87.35$_{\pm\text{0.42}}$ /	87.29$_{\pm\text{0.84}}$ /	87.32$_{\pm\text{0.50}}$ & 88.29$_{\pm\text{0.49}}$ \\ 

\textsc{HyTrel} \textit{w/} Contrastive       & 94.52$_{\pm\text{0.30}}$ &  	{\bf 89.41}$_{\pm\text{0.58}}$ /	{\bf 89.10}$_{\pm\text{0.90}}$ /	{\bf 89.26}$_{\pm\text{0.53}}$ & {\bf 90.12}$_{\pm\text{0.49}}$   \\  \bottomrule

% NCRF++                     & 83.40{\small{$\pm$0.34}} / 76.96{\small{$\pm$0.46}} /  80.05{\small{$\pm$0.12}} &	91.28{\small{$\pm$1.08}} / 80.57{\small{$\pm$0.55}} /  85.59{\small{$\pm$0.23}}\\ % 0.53	0.49	0.33   \\ 
%  FLAIR        & 81.07{\small{$\pm$0.29}}	/ 75.28{\small{$\pm$0.57}}	/  78.06{\small{$\pm$0.39 }} &	 90.67{\small{$\pm$1.02}} / 81.45{\small{$\pm$1.41}} /  85.81{\small{$\pm$0.62}}\\ % 0.29	0.57	0.39   \\
    
% Pooled FLAIR  & 82.11{\small{$\pm$0.50}}	/ 77.55{\small{$\pm$0.40}}	/  79.76{\small{$\pm$0.34 }} & 87.79{\small{$\pm$1.31}}	/ 86.57{\small{$\pm$1.10}}	/  87.17{\small{$\pm$0.17 }}	\\ % 0.50	0.40	0.34   \\

% Tuning Bio/SciBERT     & 83.94{\small{$\pm$0.40}} / 83.12{\small{$\pm$0.30}} /  83.53{\small{$\pm$0.32}} & 90.93{\small{$\pm$0.66}} / 88.99{\small{$\pm$1.61}} /  89.95{\small{$\pm$0.64}}	\\

         \bottomrule

\end{tabular}
% \vspace{-5pt}
\caption{\small Test results on the TTD (Accuracy Score,\%) and TSP (Precision/Recall/F1 Scores,\%) tasks. The results of TaBERT and \textsc{HyTrel} are from the average of 5 system runs with different random seeds. }
% \vspace{-15pt}
    \label{tab:table}
\end{table*}

% \vspace{-5pt}
\subsection{Baselines}
% \vspace{-7pt}
\noindent {\bf TaBERT}~\citep{yin20acl} is a representative TaLM that flattens the tables into sequences and jointly learns representations for sentences and tables by pretraining the model from the BERT checkpoints. \textit{K=1} and \textit{K=3} are the two variants based on the number of rows used. 

\noindent {\bf TURL}~\citep{deng2020turl} is another representative TaLM that also flattens the tables into sequences and pretrains from TinyBERT~\citep{jiao-etal-2020-tinybert} checkpoints. It introduces a vision matrix to incorporate table structure into the representations.  

\noindent {\bf  Doduo}~\citep{10.1145/3514221.3517906} is a state-of-the-art column annotation system that fine-tunes the BERT and uses table serialization to incorporate table content.

% tracker link: \url{https://docs.google.com/spreadsheets/d/1YyU-dv4hkoKTzV0C-UXmn4MZ3NAxMn86baOddAzSD6M/edit?usp=sharing}

% from~\citet{trabelsi22www}

% \vspace{-10pt}
\subsection{Main Results}
% \vspace{-7pt}

The results are presented in Tables~\ref{tab:column} and~\ref{tab:table}. Overall, \textsc{HyTrel} consistently outperforms the baselines and achieve superior performance. 
A salient observation is that our model (even without pretraining) can achieve close to state-of-the-art performance.
In comparison, we notice that the performance slumps significantly for TaBERT without pretraining. This phenomenon empirically demonstrates the advantages of modeling the table structures as hypergraphs over the other methods that we compare. 

% Our hypergraph structure-based model performs well even without any additional pretraining stage and instead when directly trained on the supervised fine-tuning stage data.

Additionally, we observe that the two pretraining objectives help different tasks in different ways. For the CTA, CPA, and TTD tasks, the two objectives can help \textsc{HyTrel} further improve its performance. In general, the ELECTRA objective performs better than the contrastive objective.
These results are also in line with the representation analysis in Section~\ref{sec:tsne} where we observe that the ELECTRA objective tends to learn table structure better than the contrastive objective. However, for the TSP task, we observe that the contrastive objective can help the \textsc{HyTrel} model while the ELECTRA objective fails to bring any improvement. One possible reason for the ineffectiveness of the ELECTRA objective could be its inability to transfer well across domains. \textsc{HyTrel} pretrained with tables from Wikipedia and Common Crawl could not transfer well to the medical domain PMC dataset. As for the improvement observed from the contrastive objective, the reason could be that contrastive learning that uses similarity metrics in the objective function can naturally help the similarity prediction task.

{\bf Scalability:} As stated in Section~\ref{sec:exp}, we have truncated large tables during pretraining. However, this truncation does not hinder the ability of \textsc{HyTrel} to handle large table inputs in downstream tasks. In Appendix~\ref{appdx:tab_size}, we present additional experiments demonstrating that: (a) \textsc{HyTrel} can effectively process tables of any size as inputs, and (b) down-sampling can be a favorable strategy when working with large input tables, significantly improving efficiency without compromising performance. 

% {\bf TODO: Scalability:} Can handle inputs of arbitrary sizes and  downsampling doesn't have an adverse impact on downstream performance.

% \vspace{-7pt}
\section{Qualitative Analysis}
\label{sec:bibtex}

% \begin{figure}[h]
\begin{wrapfigure}{r}{0.45\textwidth}
\begin{center}
% \vspace{-10pt}
% \setlength{\belowcaptionskip}{-0.75cm} 
\includegraphics[width=150pt]{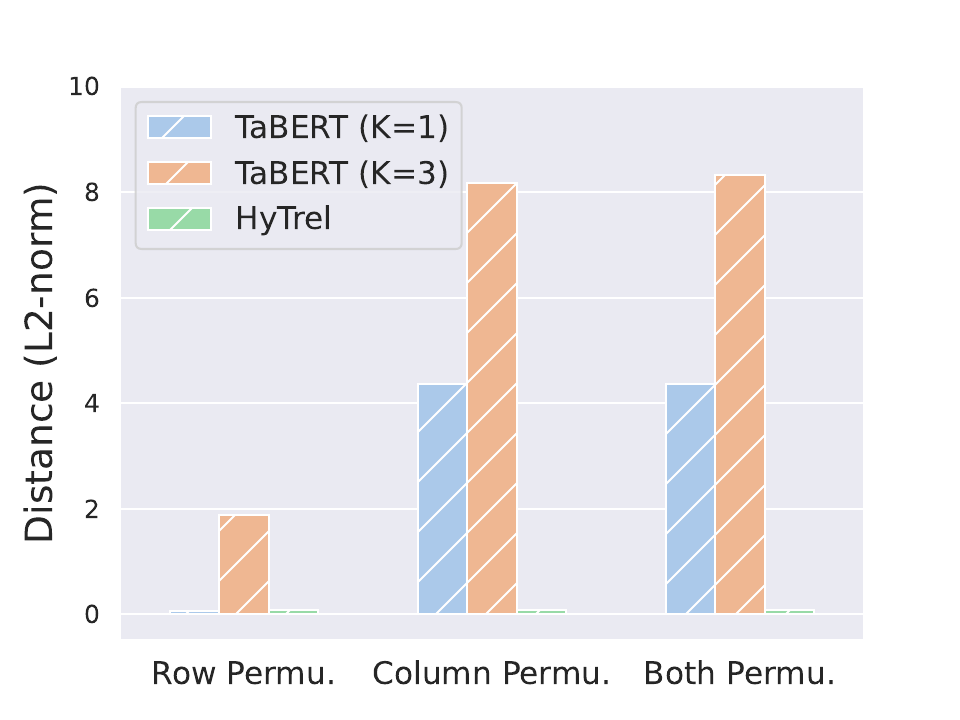}
% \vspace{-0.2cm}
\caption{\small Average distance between table element representations before and after permutations. The \textsc{HyTrel} is immune to the permutations while the TaBERT is sensitive to them.}
% \vspace{-20pt}
\label{fig:robust}
\end{center}
\end{wrapfigure}
% \end{figure}

% \vspace{-10pt}
\subsection{\textsc{HyTrel} Learns Permutation Robust Representations }
% \vspace{-7pt}

We sample 5,000 tables from the validation data for analysis. We  analyze the impact of applying different permutations to a table, including permuting rows, columns, and both rows/columns independently.

Toward our analysis, we measure the Euclidean distance (L2 Norm) of the representations (cells, rows, columns and tables). As shown in Figure~\ref{fig:robust}, the distance is almost always 0 for the \textsc{HyTrel} model because of its explicit invariance-preserving property. 
On the other hand, for the TaBERT model, the distance is not trivial. 
We observe that when more rows (K=3) are enabled, the value of the L2 norm increases as we introduce different permutations. 
Moreover, we also observe that permuting the columns has a greater impact on the L2 norm than the row permutations. 
A combination of rows and columns permutations has the largest impact among all three actions. 
Note that when K=1 with just one row, the effect of row permutation is disabled for TaBERT.

% \vspace{-5pt}
\subsection{\textsc{HyTrel} Learns the Underlying Hierarchical Table Structure}
\label{sec:tsne}
% \vspace{-7pt}
Next, we demonstrate that the \textsc{HyTrel} model has learned the hierarchical table structure into its representations, as we target at. We use t-SNE \citep{van2008visualizing} for the visualization of different table elements from the same 5,000 validation tables, as shown in  Figure~\ref{fig:tsne}. 

% For each table element,  we use their corresponding hyperedge representations for analysis.  

% As one of the goals of our work, we set out to incorporate the row and column structure information in tables in our model while

% To visually demonstrate that we achieve this, we plot the representations of 5,000 tables from the validation set of the pretraining corpus. 

\begin{figure*}[h]
\begin{center}
% \setlength{\belowcaptionskip}{-0.75cm} 
% \vspace{-8pt}
\includegraphics[width=360pt]{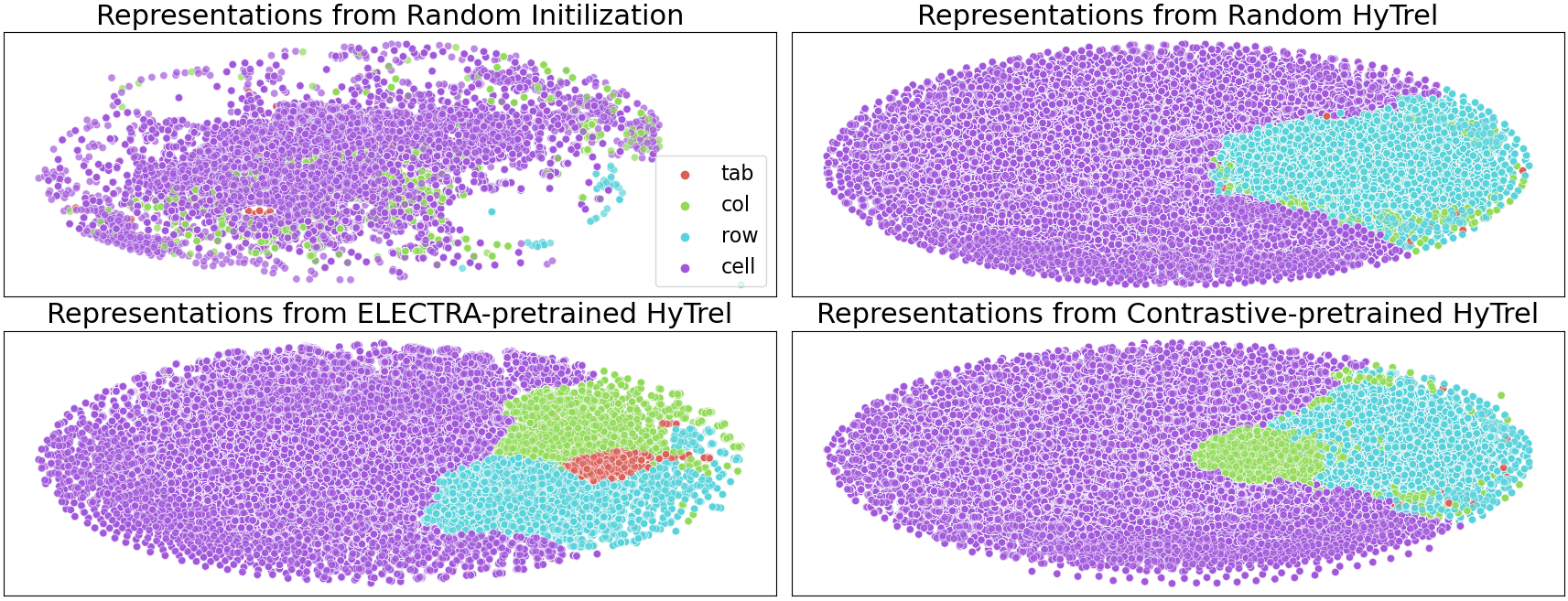}
% \vspace{-8pt}
\caption{t-SNE visualization of the representations learned by \textsc{HyTrel}. `tab', `col', `row', and `cell' are the representations for different table elements: tables, columns, rows, and cells. }
% \vspace{-15pt}
\label{fig:tsne}
\end{center}
\end{figure*}

We observe that with random initializations, different table elements cannot be distinguished properly. 
After the encoding of the randomly initialized \textsc{HyTrel} model, we start to observe noticeable differences for different table elements in the visualization space.
Notably, the individual cell representations start to concentrate together and can be distinguished from high-level table elements (tables, columns, and rows) which occupy their separate places in the space of representations. 
% This shows that the high-level table structure can be learned by the latent representations of the \textsc{HyTrel} model, and this can be rationaled by how \textsc{HyTrel} regards cells as nodes and high-level table elements as hyperedges.
We also notice that, by pretraining the \textsc{HyTrel} with the ELECTRA objective, all table elements can be well separated, showing that it incorporates the hierarchical table structure into its latent representations. 
As for the contrastive pretraining, we  see that it can distinguish columns from rows as compared with randomly initialized \textsc{HyTrel}, but could not to well separate the table representations in comparison with the ELECTRA pretraining. 
This also partially explains the better performance of the ELECTRA pretraining in the CTA, CPA and TTD tasks in contrast to the contrastive pretraining.             
% \subsection{Exploration of objectives.}
% 
% Show how the performance differs for different task with and without pretraining
% 
% Explore different contrastive objectives
% Talk about GAE does not work
% 
% Contrastive Node corruption performance. Demonstrate that the model matters, not the objectives.  

% \vspace{-5pt}
\subsection{\textsc{HyTrel} Demonstrates Effective Pretraining by Capturing the Table Structures}
\label{appdx:eff}
% \vspace{-7pt}
% \ruihong{what efficiency? ``Resource Efficiency'' or ``Computing Efficiency'' or something more specific ``Low Pretraining Requirement''?}

 \begin{figure*}[ht]
\begin{center}
% \setlength{\belowcaptionskip}{-0.75cm} 
% \vspace{-10pt}
\includegraphics[width=400pt]{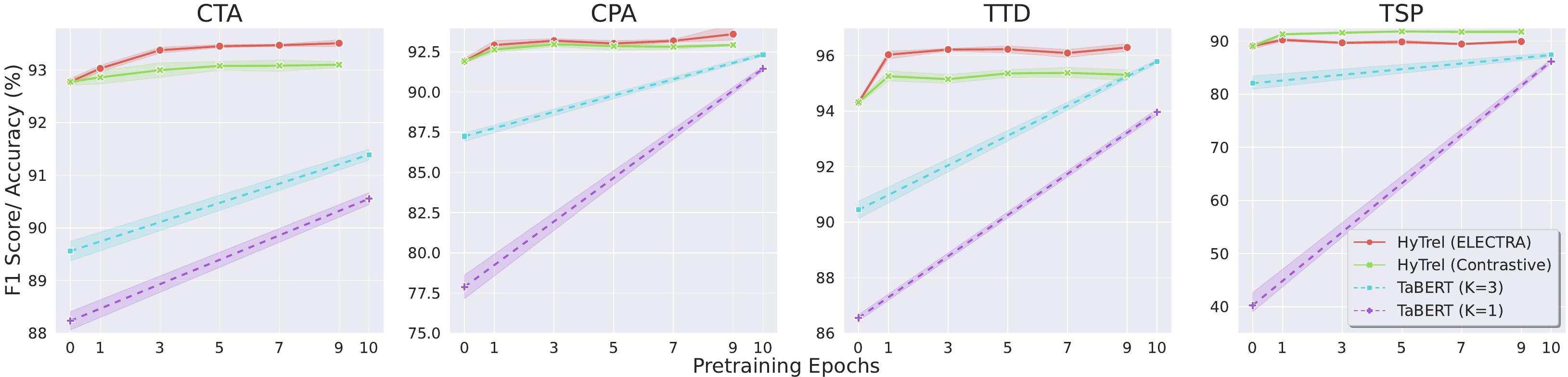}
% \vspace{-20pt}

\caption{\small Performance with different pretraining checkpoints on the validation set of the four tasks. 
For the TaBERT models, we can only access the randomly initialized and fully pretrained (10 epochs) checkpoints. All results are from 5 system runs with different random seeds.
}
% \vspace{-15pt}
\label{fig:eff}
\end{center}
\end{figure*}

Our evaluation shows that the \textsc{HyTrel} model can perform well without pretraining, which demonstrates its training efficiency by modeling table structures. Here we further analyze how much pretraining is required for \textsc{HyTrel} to further enhance its performance, as compared with the baseline model. 
We plot the validation performance of the tasks evaluated at different pretraining checkpoints in Figure~\ref{fig:eff}.   

Overall, we can observe that the performance of \textsc{HyTrel} improves drastically during the first several pretraining epochs, and then saturates at about 5 epochs. With the minimal pretraining, \textsc{HyTrel} can outperform the fully pretrained TaBERT models. This demonstrates that our model does not require extensive pretraining to further improve its performance in contrast with previous TaLMs (e.g., TURL for 100 epochs, TaBERT for 10 epochs).
% 
% Considering the computing overhead as listed in the Appendix, our model is also more environmentally friendly.  
% 
% and TaBBIE for 7 epochs
% 
Besides, we also observe from the curves that the ELECTRA objective consistently outperforms the contrastive objective for the CTA, CPA, and TTD tasks, but under-performs on the TSP task. 
Also, the ELECTRA objective has a negative impact on the TSP task when pretrained for longer duration, which is in line with our previous findings.

% \subsection{Learning Curve (validation results for models)}

% \vspace{-10pt}
\section{Related Work}
% \vspace{-10pt}
% There have been two avenues of research on pretraining with table data conducted in this 

% There have been two avenues of research with tabular data pretraining. The first group of studies focus on the type of tabular data that aim to predict labels (essentially one row) for classification and regression problems\citep{arik2021tabnet, somepalli2021saint, huang2020tabtransformer,gorishniy2021revisiting,grinsztajn2022why,wang2022transtab,Du2022,wydmanski2023hypertab}. These work utilize row information along with column schema as input, and use gradient descent-based end-end learning. The goal of these studies is to conduct task-specific model pretraining and finetuning to perform better than tree-based models. 

There are two avenues of research that have studied in tabular data representation learning. The first group of studies focus on predicting labels (essentially one row) for classification and regression problems, using row information and column schema as input\citep{huang2020tabtransformer,arik2021tabnet, somepalli2021saint,gorishniy2021revisiting,grinsztajn2022why,wang2022transtab,Du2022,wydmanski2023hypertab,10.5555/3618408.3619291}. These studies use gradient descent-based end-to-end learning and aim to outperform tree-based models through task-specific model pretraining and fine-tuning.

The second group of studies proposes TaLMs to retrieve task-agnostic tabular data representations for different downstream table understanding tasks. Drawing inspiration from textual Language Models like BERT~\citep{devlin-etal-2019-bert}, many works~\citep{herzig-etal-2020-tapas,yin20acl,deng2020turl,iida-etal-2021-tabbie,eisenschlos-etal-2021-mate} serialize tables to a sequence of tokens, leveraging existing checkpoints and textual self-supervised objectives. However, the representations of the tables can not only be learned from table content and by utilizing table structures, similar to other forms of semi-structured data like code and HTML. Some contemporary works have noticed the importance of the table structures and introduce many techniques to learn a certain aspect of them, such as masking~\citep{deng2020turl}, coordinates~\citep{10.1145/3447548.3467434,dash2022permutation}, and attention bias~\citep{yang-etal-2022-tableformer}. Our work belongs to this second group of studies and we propose to use hypergraphs to comprehensively model the rich table structures, and this is close to previous graph-based neural networks \citep{ mueller-etal-2019-answering,10.1145/3404835.3462909,wang2021tcn} where tables have been structured as graphs to incorporate row and column order information.

% mainly aims at understanding the structure of tables and extending them to retrieve task-agnostic generalized representations of tables that can be used for different downstream table-based tasks \citep{herzig-etal-2020-tapas, iida-etal-2021-tabbie, deng2020turl, wang2021tuta}. These studies use different deep neural network architectures mainly in the form of transformer encoders that in addition to taking the entire table information in the form of a serialized string, also add different learnable parameters to encode the structure of the tables. However, these studies do not emphasize much on how robust the representations are when the table column and row orders are perturbed. Our work which aims at representing table data as hypergraphs to incorporate the structure is close to previous graph-based neural networks \citep{10.1145/3404835.3462909, mueller-etal-2019-answering, wang2021tcn} where tables have been structured as graphs to incorporate row and column order information. 

Table representation learning that focuses on joint text and table understanding is a separate field of research that partially overlaps with our work. Among them, some work~\citep{herzig-etal-2020-tapas,shi2022generation,herzig2021open,glass2021capturing,yang-etal-2022-tableformer} specialize in question-answering (QA) on tables and they jointly train a model that takes the question and the table structure as input together, allowing the pretraining to attend to the interactions of both texts and tables and boosting the table-based QA tasks. Another branch of joint text and table understanding work focuses on text generation from tables\citep{parikh2020totto,yoran2021turning,wang-etal-2022-robust,andrejczuk2022table}, relying on an encoder-decoder model like T5~\citep{2020t5} that can encode tables and decode free-form texts. In contrast to these studies, our work centers on the importance of structures in tables for table representation only, without extra text encoding or decoding.   

% Learning table structure representations that can be used for question-answering on tables  \citep{wang-etal-2022-robust, shi2022generation} is a separate field of research that partially overlaps with our work. However, the goal of these studies is different compared to the goal we set out with regard to learning better task-agnostic representations. The goal of these studies is to jointly train a model that takes the question and the table structure as input together and using pretraining objectives that attend to both the question and the table, learn a model that can do well on both NL question answering or semantic parsing with contextual information \citep{andrejczuk2022table, herzig2021open, glass2021capturing}. These studies are also related to models adapted to generation tasks from structured information \citep{parikh2020totto, yoran2021turning, shi2022generation}. In our work instead, we try to focus on the importance of structure in tabular models in isolation.

% tableQA\citep{herzig-etal-2020-tapas,shi2022generation,herzig2021open,glass2021capturing,yang-etal-2022-tableformer}
% generation\citep{parikh2020totto,yoran2021turning,shi2022generation,wang-etal-2022-robust,andrejczuk2022table}

Learning on hypergraphs has gone through a series of evolution~\citep{agarwal2006higher,yadati2019hypergcn,arya2020hypersage} in the way the hypergraph structure is modeled using neural networks layers. However, many of them collapse the hypergraph into a fully connected graph by clique expansion and cannot preserve the high-order interaction among the nodes. The recent development of permutation invariant networks~\citep{NIPS2017_f22e4747,lee2019set} has enabled high-order interactions on the hypergraphs~\citep{chien2022you} that uses parameterized multi-set functions to model dual relations from node to hyperedges and vice versa. Closely related to the latest advancement, our \textsc{HyTrel} model adopts a similar neural message passing on hypergraphs to preserve the invariance property and high-order interactions of tables.

\section{Limitations}
% \vspace{-10pt}
% \pp{discuss potential negative societal impacts of your work, limitations, save place between sections, but not sides and margin. }
% \pp{remove the hierarchical tables, add inability for transferring to other domains}

% Please help to polish this paragraph and make it short and concise: One limitation of the proposed \textsc{HyTrel} is its ineffectiveness to transfer the ELECTRA pretraining on Wikipedia and Common Crawl to other domains like the medical one. Besides, in line with previous tabular language models, the \textsc{HyTrel} works on tables that have a relational structure with simple columns. However, there exist tables that have complicated hierarchical column structures that are not in the scope of this study. Also, \textsc{HyTrel} does not take into account the cross-table relations and structures. We leave these as future work.  

% \pp{do we need to talk about TableQA or Table-to-text tasks?}

The proposed \textsc{HyTrel} is a table encoder, and by itself cannot handle joint text and table understanding tasks like table QA and table-to-text generation.
While it's possible to add text encoders or decoders for these tasks, it can potentially introduce additional factors that may complicate evaluating our hypothesis about the usefulness of modeling structural table properties. 
Moreover, the current model structure is designed for tables with simple column structures, like prior TaLMs, and cannot handle tables with complex hierarchical column structures. Additionally, \textsc{HyTrel} does not consider cross-table relations. Although we believe the hypergraph can generalize to model complicated column structures and cross-table interactions, we leave these aspects for future research.

% The proposed \textsc{HyTrel} has limitations in transferring ELECTRA pretraining from Wikipedia and Common Crawl to other domains, such as the medical field. 

% \vspace{-10pt}
\section{Conclusion}
% \vspace{-10pt}

In this work, we propose a tabular language model \textsc{HyTrel} that models tables as hypergraphs. It can incorporate the permutation invariances and table structures into the table representations. The evaluation on four table-related tasks demonstrates the advantages of learning these table properties and show that it can consistently achieve superior performance over the competing baselines. Our theoretical and qualitative analyses also support the effectiveness of learning the structural properties.

\bibliographystyle{unsrtnat}
\bibliography{ref}

% References follow the acknowledgments in the camera-ready paper. Use unnumbered first-level heading for
% the references. Any choice of citation style is acceptable as long as you are
% consistent. It is permissible to reduce the font size to \verb+small+ (9 point)
% when listing the references.
% Note that the Reference section does not count towards the page limit.
% \medskip

% {
% \small

% [1] Alexander, J.A.\ \& Mozer, M.C.\ (1995) Template-based algorithms for
% connectionist rule extraction. In G.\ Tesauro, D.S.\ Touretzky and T.K.\ Leen
% (eds.), {\it Advances in Neural Information Processing Systems 7},
% pp.\ 609--616. Cambridge, MA: MIT Press.

% [2] Bower, J.M.\ \& Beeman, D.\ (1995) {\it The Book of GENESIS: Exploring
%   Realistic Neural Models with the GEneral NEural SImulation System.}  New York:
% TELOS/Springer--Verlag.

% [3] Hasselmo, M.E., Schnell, E.\ \& Barkai, E.\ (1995) Dynamics of learning and
% recall at excitatory recurrent synapses and cholinergic modulation in rat
% hippocampal region CA3. {\it Journal of Neuroscience} {\bf 15}(7):5249-5262.
% }

%%%%%%%%%%%%%%%%%%%%%%%%%%%%%%%%%%%%%%%%%%%%%%%%%%%%%%%%%%%%
%%%%%%%%%%%%%%%%%%%%%%%%%%%%%%%%%%%%%%%%%%%%%%%%%%%%%%%%%%%%%%%%%%%%%%%%%%%%%%%
%%%%%%%%%%%%%%%%%%%%%%%%%%%%%%%%%%%%%%%%%%%%%%%%%%%%%%%%%%%%%%%%%%%%%%%%%%%%%%%
% APPENDIX
%%%%%%%%%%%%%%%%%%%%%%%%%%%%%%%%%%%%%%%%%%%%%%%%%%%%%%%%%%%%%%%%%%%%%%%%%%%%%%%
%%%%%%%%%%%%%%%%%%%%%%%%%%%%%%%%%%%%%%%%%%%%%%%%%%%%%%%%%%%%%%%%%%%%%%%%%%%%%%%
\newpage
\appendix
\onecolumn

\section{Details of the Proofs}

\subsection{Preliminaries}
\label{sec:prelims}
\begin{definition}[Hypergraph]
Let $H=(V,E, \mX, \mE)$ denote a hypergraph $H$ with a finite vertex set $V=\{v_1, \ldots , v_n\}$, corresponding vertex features with $\mX \in \R^{n \times d}; \, d>0$, a finite hyperedge set $E = \{e_1, \ldots, e_m\}$, where $E \subseteq P^*(V) \setminus \{\emptyset\}$
and $\bigcup\limits_{i=1}^{m}e_i=V$, and $P^*(V)$ denotes the power set on the vertices, the corresponding hyperedge features $\mE \in \R^{m \times d}; \, d > 0$. The hypergraph connections can be conveniently represented as an incidence matrix  $\mathbf{B} \in \{0,1\}^{n \times m}$, where $\mathbf{B}_{ij} =1$ when node $i$ belong to hyperedge $j$ and $\mathbf{B}_{ij}=0$ otherwise.
\end{definition}

\begin{definition}[1-Wesifeiler Lehman Test]
Let $G=(V,E)$ be a graph, with a finite vertex set $V$ and let $s:V \to \Delta$ be a node coloring function with an arbitrary co-domain $\Delta$ and $s(v), v \in V$ denotes the color of a node in the graph.
Correspondingly, we say a labeled graph $(G, s)$ is a graph $G$ with a complete node coloring $s:V \to \Delta$.
The 1-WL Algorithm \citep{doi:10.1137/0209047} can then be described as follows:
let $(G, s)$ be a labeled graph and in each iteration, $t \geq 0,$ the 1-WL computes a node coloring $c_{s}^{(t)}: V \to \Delta,$ which depends on the coloring from the previous iteration.
The coloring of a node in iteration $t>0$ is then computed as
$c_{s}^{(t)}(v)= \text{HASH}\left(\left(c_{s}^{(t-1)}(v),\left\{c_{s}^{(t-1)}(u) \mid u \in N(v)\right\}\right)\right)$
where HASH is bijective map between the vertex set and $\Delta$, and $N(v)$ denotes the 1-hop neighborhood of node $v$ in the graph.
The 1-WL algorithm terminates if the number of colors across two iterations does not change, i.e., the cardinalities of the images of $c_{s}^{(t-1)}$ and $c_{s}^{(t)}$ are equal.
The 1-WL isomorphism test, is an isomorphism test, where the 1-WL algorithm is run in parallel on two graphs $G_1, G_2$ and the two graphs are deemed non-isomorphic if a different number of nodes are colored as $\kappa$ in $\Delta$.
    
\end{definition}

\begin{definition}[Group]
A group is a set $G$ equipped with a binary operation $\cdot: G \times G \to G$ obeying the following axioms:
\begin{itemize}[noitemsep, topsep=5pt, leftmargin=15pt]
% \vspace{-10pt}
    \item for all $g_1, g_2 \in G$, $g_1 \cdot g_2 \in G$ (closure).
    \item for all $g_1, g_2, g_3 \in G$, $g_1 \cdot (g_2 \cdot g_3) = (g_1 \cdot g_2) \cdot g_3$ (associativity).
    \item there is a unique $e \in G$ such that $e \cdot g = g \cdot e = g$ for all $g \in G$ (identity).
    \item for all $g \in G$ there exists $g^{-1} \in G$ such that $g \cdot g^{-1} = g^{-1} \cdot g = e$ (inverse).
\end{itemize}
\end{definition}

\begin{definition}[Permutation Group]
A permutation group (a.k.a. finite symmetric group) $\mathbb{S}_m$ is a discrete group $\gG$ defined over a finite set of size $m$ symbols (w.l.o.g. we shall consider the set $\{1,2,\ldots, m\}$) and consists of all the permutation operations that can be performed on the $m$ symbols.
Since the total number of such permutation operations is $m!$ the order of $\mathbb{S}_m$ is m!.
\end{definition}

\begin{definition}[Product Group]
Given two groups $G$ (with operation $+$) and $H$ (with operation $*$), the direct product group of $G \times H$ is defined as follows:
\begin{itemize}[noitemsep, topsep=5pt, leftmargin=15pt]
% \vspace{-8pt}
    \item The underlying set is the Cartesian product, $G \times H$. That is, the ordered pairs $(g, h)$, where $g \in G$ and $h \in H$
    \item The binary operation on $G \times H$ is defined component-wise:
$(g1, h1) \cdot (g2, h2) = (g1 + g2, h1 * h2)$
\end{itemize}
\end{definition}

\begin{definition}[Group invariant functions]
Let $G$ be a group acting on vector space $V$.
We say that a function $f : V \to \R$ is $G$-invariant
if $f(g \cdot x) = f(x)\;\; \forall x \in V, g \in G$.
\end{definition}

\begin{definition}[Orbit of an element]
For an element $x \in X$ and given a group $G$, the orbit of the element $x$ is given by $\text{Orb}(x) = \{g \cdot x \;\;|\;\; \forall g \in G\}$.
\end{definition}

\subsection{Proofs}
\label{sec:proofs}
In the section, we restate (for the sake of convenience) and prove the statements in the paper.

First, we restate the assumption and the corresponding property and then provide the proof.

\begin{assumption}
\label{assumption-appendix:septable}
For any table $(\gT_{ij})_{i \in [n], j \in [m]}$ (where $i, j$ are indexes of the rows, and columns appropriately), an arbitrary group action $g \in \mathbb{S}_n \times \mathbb{S}_m$ acting appropriately on the rows and columns leaves the target random variables associated with tasks on the entire table unchanged.
\end{assumption}

\begin{property}
A function $\phi:\mathcal{T} \mapsto \mathbf{z} \in R^d$ which satisfies Assumption \ref{assumption-appendix:septable} and defined over tabular data must be invariant to actions from the (direct) product group $\mathbb{S}_n \times \mathbb{S}_m$ acting appropriately on the table i.e. $\phi(g \cdot \gT) = \phi(\gT)\;\; \forall g \in \mathbb{S}_n \times \mathbb{S}_m$.
\end{property}
\begin{proof}
Proof by contradiction.
Let $\mathbf{y} \in \R$ be some target value associated with the entire table -- which is invariant up to permutations of the table given by Assumption \ref{assumption-appendix:septable}. Assume the case where $\phi$ is not an invariant function acting over tables i.e. $\phi(g \cdot \gT) \not= \phi(\gT)\;\; \forall g \in \mathbb{S}_m \times \mathbb{S}_n$.
This means different group actions will result in different output representations-and hence the representation is dependent on the action itself.
When $d=1$ and the value associated with $\mathbf{y}$ is given by $\phi$ acting over the table, equality between the two (for all permutation actions of a given)  is only possible if we can unlearn the group action (or learn the same representation for the entire orbit) which would make it invariant--and hence a contradiction.
\end{proof}

Next, we look at the function $g$ defined via Eqns~(\ref{eqn:start}-\ref{eqn:end}) to see that the representation obtained by a table via message passing is invariant to row/column permutations of the incidence matrix $\mathbf{B}$. Eqn~\ref{eqn:start}, which is the \texttt{Node2hyperedge} block, is a permutation invariant set aggregator which aggregates information from cells in the table to get the vectorial representation of a row/column/table hyperedge. That is, any permutation action drawn from the $\mathbb{S}_{mn}$ group doesn't impact the representation of a given hyperedge as long as the membership is consistent.  

Similarly, the \texttt{Hyperedge2Node} block (Eqn~\ref{eq:2}) is an aggregation function that aggregates information from the hyperedges to the nodes. This ensures that a given node's (in the hypergraph) representation is invariant to permutation actions from the $\mathbb{S}_{m+n+1}$ group as long as the membership is consistent.

Now, since the \texttt{Node2Hyperedge} and \texttt{Hyperedge2Node} blocks act sequentially, its easy to see that (\ref{eqn:start}-\ref{eqn:end}) jointly encode invariance to the product group $\mathbb{S}_{mn} \times \mathbb{S}_{m+n+1}$ when we obtain representations of an incidence matrix as a set (permutation invariant) aggregation of representations of all the $mn$ nodes and the $m+n+1$ hyperedges.

Next, we restate and prove the theorem about the maximal invariance property.

% \pp{no proof for line 159 about $g$ yet}

\begin{theorem}
A  continuous function $\phi:\mathcal{T} \mapsto \mathbf{z} \in \mathbb{R}^d$ over tables is maximally invariant if it is modeled as a function $g:\mathbf{B} \mapsto \mathbf{y} \in \mathbb{R}^k$ over the incidence matrix of a hypergraph $\gG$ constructed per Section~\ref{subsec:construction} (Where $g$ is defined via Eqns~(\ref{eqn:start}-\ref{eqn:end})) if $\exists$ a bijective map between the space of tables and incidence matrices (defined over appropriate sizes of tables and incidence matrices). 
That is, $\phi(\mathcal{T}_1) = \phi(\mathcal{T}_2)$ iff $\mathcal{T}_2$ is some combination of row and/or column permutation of $\mathcal{T}_1$ and $g(\mathbf{B_1}) = g(\mathbf{B_2})$ where $\mathbf{B}_1, \mathbf{B}_2$ are the corresponding (hypergraph) incidence matrices of tables $\mathcal{T}_1, \mathcal{T}_2$.
\end{theorem}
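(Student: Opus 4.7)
The plan is to prove the biconditional by a chain of equivalences: table permutation-equivalence $\Leftrightarrow$ hypergraph isomorphism $\Leftrightarrow$ equality of $g$-outputs. The bijectivity hypothesis supplies the first equivalence essentially by construction, once I check that the map in Section~\ref{subsec:construction} is well-defined on equivalence classes: a row permutation of $\mathcal{T}$ permutes the row-hyperedges (and accordingly the incident cell-nodes) of $\mathbf{B}$, and similarly for columns, so tables that differ by $\mathbb{S}_n \times \mathbb{S}_m$ are mapped to hypergraphs related by an action of $\mathbb{S}_{mn} \times \mathbb{S}_{m+n+1}$ that preserves the cell/row/column/table type-labeling on the incidence matrix. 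All of the work therefore sits in the second equivalence.

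For the $(\Leftarrow)$ direction, I would verify that each of the two \texttt{HyperAtt} blocks is built from the symmetric set-attention module \texttt{SetMH}. Consequently Eqn~(\ref{eqn:start}) is invariant under permutations within the node-set of every hyperedge, Eqn~(\ref{eq:2}) acts pointwise, and the second \texttt{HyperAtt} block is invariant under permutations within the hyperedge-set incident to each node. Stacking $12$ such layers and reading out with a permutation-invariant pool yields a function on $\mathbf{B}$ that factors through orbits of the relevant product group, so $g(\mathbf{B}_1)=g(\mathbf{B}_2)$ whenever $\mathcal{T}_2$ is a row/column permutation of $\mathcal{T}_1$.

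For the harder $(\Rightarrow)$ direction I would pass to the star expansion $S(\gG)$, the bipartite graph whose two sides are the cell-nodes and the (row/column/table) hyperedges and whose edges encode incidence. Lemma~1 of \citep{tyshkevich1996line} (applied in the typed/labeled setting) implies that two hypergraphs in our class are isomorphic iff their star expansions are isomorphic as labeled bipartite graphs. I then argue that one pass of \texttt{Node2Hyperedge} $\to$ \texttt{Hyperedge Fusion} $\to$ \texttt{Hyperedge2Node} simulates one round of $1$-WL color refinement on $S(\gG)$: the set aggregator, composed with an \texttt{MLP}, can realize an injective multiset hash on a compact feature domain by a Deep-Sets/Morris-style density argument that uses the continuity hypothesis on $\phi$. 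After sufficiently many layers the stable $1$-WL coloring is reached, and a final invariant readout over node and hyperedge colors then separates non-isomorphic star expansions, hence via Tyshkevich non-isomorphic hypergraphs, hence via the bijectivity hypothesis tables that are not row/column permutations of one another.

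The main obstacle I anticipate is justifying the third step above: that the specific continuous set-attention aggregator used here is \emph{expressive enough} to realize an injective multiset function on the encountered feature space, rather than merely a permutation-invariant one. Three ingredients must be combined carefully: (i) the continuity of $\phi$, which allows us to replace the discrete \textsc{HASH} in the $1$-WL definition by a continuous universal approximator; (ii) the fact that cells and row/column/table hyperedges are initialized with rich, typically distinguishing, textual features, which breaks the classical regular-graph symmetries that defeat $1$-WL on featureless graphs; and (iii) the bipartite, type-stratified structure of the star expansion arising from a table, which keeps the Tyshkevich correspondence inside the regime where $1$-WL is complete. Weaving these three together, rather than any of them in isolation, is where I expect the technical care to be concentrated; the remaining pieces are routine invariance bookkeeping.
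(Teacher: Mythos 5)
Your proposal follows essentially the same route as the paper's proof: both directions rest on the permutation invariance of the set-attention blocks, the Tyshkevich star-expansion correspondence, and simulating 1-WL refinement on the star expansion via an injective multiset aggregator (the paper invokes the injectivity and universality of AllSet, plus a theorem of Srinivasan et al., where you invoke a Deep-Sets/Morris-style density argument). The only difference is one of emphasis: you flag the 1-WL completeness caveat more explicitly than the paper does, but the underlying argument is the same.
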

\begin{proof}
Proof by construction. 
We will assume the existence of a bijective map between the space of tables and incidence matrices (with appropriate indexing by column names and some canonical row ordering).
Without loss of generality, we assume the number of rows and columns in the table to be $n$ and $m$ respectively, and some canonical ordering of the rows and columns in the table for the sake of convenience.
Hence the number of nodes,  and hyperedges in the hypergraph based on the formulation in Section~\ref{subsec:construction} is $mn$ and $m+n+1$ respectively.
An invariant function over the incidence matrices means that function is invariant to any consistent relabelling of nodes and hyperedges in the hypergraph--hypergraph incidence matrix of size $mn \times (m+n+1)$ and hence (upon node/hyperedge relabeling) invariant to the  $\mathbb{S}_{mn} \times \mathbb{S}_{m+n+1}$ group. It is important to note that any row/column permutation of the table affects the ordering of both the rows and columns of the incidence matrix.

The existence of a bijective map between $\phi$ and $g$ guarantees that we know exactly which elements (cells, rows, columns, table) participate in which nodes/hyperedges and which don't. 
This is evident as a different permutation of the rows/columns would result in different incidence matrices and hence the above information can be deconstructed by obtaining the mappings between all permutations of a table and corresponding incidence matrices.
Now, as this is proof of maximal invariance we have to prove both directions i.e. when the function $\phi$ maps two tables to the same representation-the tables are just row/column permutations of each other, and the $g$ over their corresponding incidence matrices yield identical representations and vice versa. 

{\bf Direction 1}: Assume a continuous function $\phi$ over two tables $\gT_1, \gT_2$ yields the same representation $\phi(\gT_1) = \phi(\gT_2)$.
Now, we need to show that the function $g$ over the corresponding incidence matrices $\mathbf{B}_1, \mathbf{B}_2$ yield the same representation.
Let $\rho$ be the bijective map between the space of tables and incidence matrices.
The guarantee of a bijective map between tables and incidence matrices and the injective (one-to-one) nature of AllSet \citep{chien2022you}
 which is a multiset function that ensures that the hyperedges with the same elements get the same representation and non-identical hyperedges obtain different representations (via Eqns(\ref{eqn:start}-\ref{eqn:end}). Hence, when the outputs of $\phi$ match, the outputs of $g$ also do - and this is possible when the tables are identical up to permutations of each other.
 The universality property of AllSet, combined with the existence of the bijective map ensures the existence of a neural network $g$ for a given continuous function $\phi$.

 {\bf Direction 2}: Assume a function $g$ which maps incidence matrices $\mathbf{B}_1, \mathbf{B}_2$ of two hypergraphs to the same representation. 
Now again, since the bijective map exists, it ensures that we can check if the same nodes and hyperedges map across the two incidence matrices.  This confirms the existence of a bijective map between the family of stars \citep{tyshkevich1996line} of the two hypergraphs (i).
The 1-Weisfeiler Lehman message passing equations of the \texttt{HyperTrans} (Along with the inherited injectivity property of AllSet) module along the star expansion of the hypergraph ensures that the star expansions of the hypergraphs are 1-WL isomorphic \citep{xu2018powerful} (ii). Combining (i) and (ii) together via Theorem 2 from \citep{srinivasan2021learning}, we get that the hypergraphs are isomorphic - and therefore yielding the tables to be identical and $\phi$ getting the same representation due to Property~\ref{pro:1}.
It is easy to see that when the $g$ yields different representations, the tables are not identical, and hence $\phi$ yields a different representation as well.
\end{proof}

% \pp{we have not mentioned in the main paper, do we still need it?}
% Next, we remark upon the universality of our \textsc{HyTrel} model.

% \begin{remark}
% \label{remark:universality}
% Let $\phi:\mathcal{T} \mapsto \mathbb{Z} \in \mathbb{R}^d$ be any continuous function  over tables that is maximally invariant to permutation actions over rows and columns.
% Then, there exists a \textsc{HyTrel} neural network, when Universal Transformers \citep{DBLP:conf/iclr/DehghaniGVUK19} are used in the embedding layer (and given an appropriate compact set over the embeddings of the entries in the table) that approximates $\phi$ to
% arbitrary precision.
% \end{remark}
% \begin{proof}
% The proof follows as a direct consequence of sequentially using statements from \citep{DBLP:conf/iclr/DehghaniGVUK19}, the universality of $G-$invariant neural networks \citep{yarotsky2022universal,maron2019universality} and the Theorem~\ref{theorem} in this paper.
% \end{proof}

\section{Further Analysis of \textsc{HyTrel}}
\label{appdx:further_analysis}

% \subsection{Model Size (Parameters)}
% \begin{table}[h]
% % \setlength{\tabcolsep}{8pt}
% % \setlength{\belowcaptionskip}{-15pt}
% \centering
% \begin{tabular}{@{}ccccccc@{}}
%     \toprule
%     \toprule
%     \begin{tabular}[c]{@{}l@{}}Models \end{tabular} & \begin{tabular}[c]{@{}l@{}} Batch \\ Size \end{tabular} & \begin{tabular}[c]{@{}l@{}} Learning \\ Rate \end{tabular} & \begin{tabular}[c]{@{}l@{}} Warm-up \\ Ratio  \end{tabular} & \begin{tabular}[c]{@{}l@{}} Max \\ Epochs \end{tabular} & \begin{tabular}[c]{@{}l@{}} Weight \\ Decay \end{tabular} & \begin{tabular}[c]{@{}l@{}} Optimizer \end{tabular}  \\ \midrule
%     \textsc{HyTrel}  \textit{w/} ELECTRA    & 8192  &  5e$^{-4}$  & 0.05   & 5 & 0.02 & Adam  \\
%     \textsc{HyTrel}  \textit{w/} Contrastive  & 4096  & 1e$^{-4}$  & 0.05   & 5  & 0.02 & Adam  \\ 
%     \bottomrule
%     \bottomrule
%     \end{tabular}
% \caption{Pretraining Hyper-parameters}
% \label{tab:param1}
% \end{table}

\subsection{Effect of Input Table Size}
\label{appdx:tab_size}

As mentioned in Section~\ref{sec:exp}, we have truncated the large tables and only keep a maximum of 30 rows and 20 columns in pretraining for efficiency purposes. However, the truncation does not affect \textsc{HyTrel}'s ability to deal with large tables in downstream tasks. Our model can deal with the arbitrary size of tables. This is different from the BERT-based TaLMs (e.g., TaBERT, TURL) that have positional encodings, and the pretrained checkpoints limit the positions within a length of 512 for an input sequence. With the \textsc{HyTrel}, we can always build large hypergraphs with large tables by adding more nodes under each hyperedge, without sacrificing information loss.

To further demonstrate this, we do experiments on the table type detection (TTD) dataset, as it has relatively large tables (an average number of 157 rows). Below in Table~\ref{tab:input} are the experimental results of using different maximum row/column limitations with  \textsc{HyTrel} (ELECTRA), and we fine-tune the dataset on an NVIDIA A10 Tensor Core GPU (24GB).

\begin{table}[h]
\small
\centering
	\begin{tabular}{@{}ccccc@{}}
		\toprule
        \toprule
		\begin{tabular}[c]{@{}l@{}}Size Limitations  \\(\#rows, \#columns) \end{tabular} & \begin{tabular}[c]{@{}l@{}} Dev Acc (\%) \end{tabular} & \begin{tabular}[c]{@{}l@{}} Test Acc (\%) \end{tabular} & \begin{tabular}[c]{@{}l@{}} Training Time \\ (min/epoch)  \end{tabular} & \begin{tabular}[c]{@{}l@{}} GPU Memory Consumption \\ (memory used/batch size, GB/sample) \end{tabular}  \\ \midrule
            (3, 2)    & 78.47 &	78.00	& 3	&  0.14 \\
            (15, 10)    & 95.36 &	95.96	& 9	& 0.48 \\
		(30, 20)    & 96.23 &	95.81	& 14	& 0.58 \\
		(60, 40)   & 96.38 &	95.80	& 23 &	1.41  \\ 
            (120, 80)  & 96.02 &	95.71 &	51 &	5.13 \\ 
            (240, 160)   & 96.06 &	95.67	& 90 &	16.00 \\ 
        \bottomrule
        \bottomrule
	\end{tabular}
\caption{Effect of Input Table Size}
\label{tab:input}
\end{table}

We can see that \textsc{HyTrel} has no difficulty dealing with the arbitrary size of input tables. However, we also observe that the prediction performances are almost the same when we enlarge the table size limitation to a certain number, but the time and hardware resource consumption will increase dramatically. So we recommend using downsampling to deal with large tables, and it will be much more efficient without hurting the performance. 

% in a real application, we always have hardware constraints that cannot scale to extremely large tables. We would suggest using the downsampling method to adapt to the available resources.

\subsection{Effect of Excessive Invariance}
\label{appdx:invariance}

As we have emphasized, the invariant permutation property of a table restricts the permutation of rows and columns to be independent. Arbitrary permutations of rows or columns can lead to an excessive invariance problem, ultimately compromising the table's semantics. Take Figure ~\ref{fig:inv} as an example, permuting the rows and columns independently will result in semantically the same table (Upper), but arbitrarily permuting will break the semantics of the table and cause excessive invariance (Below).

We use the TaBERT model to demonstrate the effect brought by the excessive invariance. As we have introduced before, the TaBERT models use a linearization approach with positional encoding to model a table. So they cannot preserve the table invariance property. However, we cannot achieve this by simply removing the positions, as it will cause excessive invariance and it is equivalent to shuffling all the cells in a table. As we emphasize, we only assume independent row/column permutation invariance. We empirically show the effect of disabling the positional encodings for the TaBERT model. As shown in Table~\ref{tab:inv}, we evaluate the model on the Table Type Detection dataset. It turns out that the excessive invariance will cause a significant performance decrease.

\begin{figure}[h]
\begin{center}
\includegraphics[width=320pt]{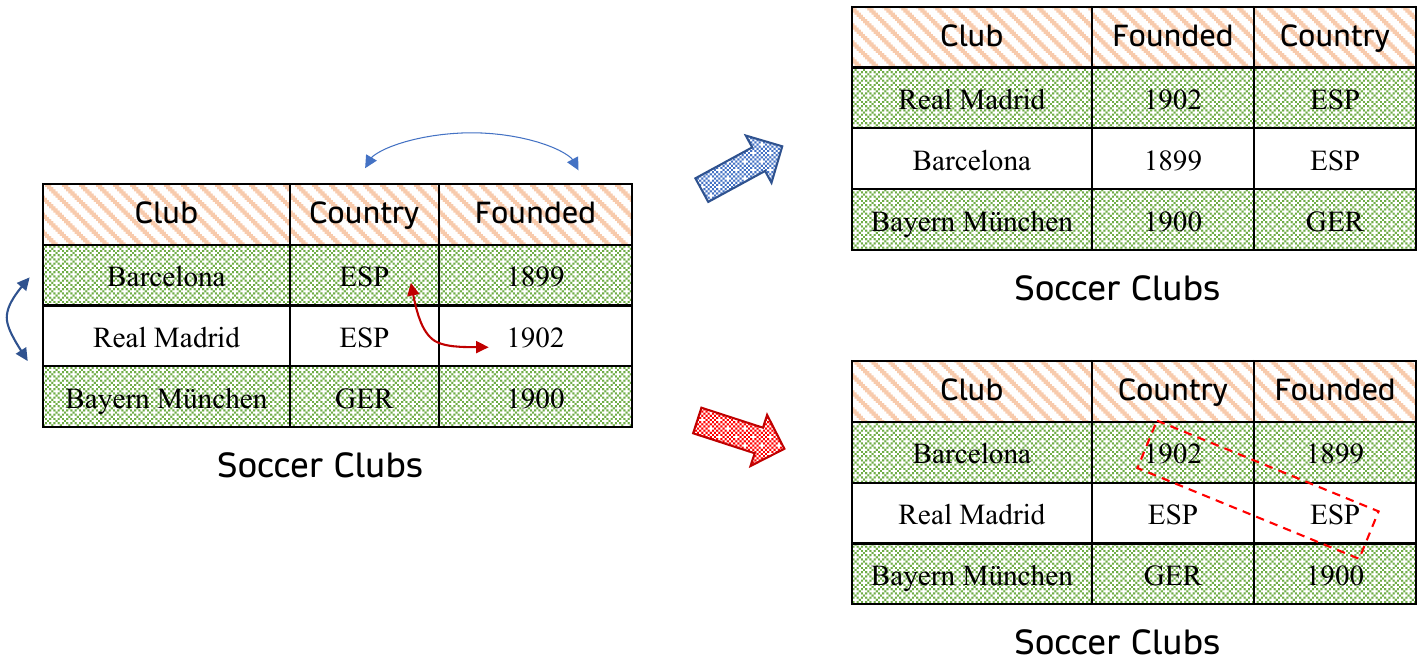}
\caption{An example of table permutations.  }
\label{fig:inv}
\end{center}
\end{figure}

% \begin{wraptable}{r}{8cm}
\begin{table*}[h]
\centering

	\begin{tabular}{@{}ccc@{}}
		\toprule
        \toprule
		\begin{tabular}[c]{@{}l@{}} Models \end{tabular} & \begin{tabular}[c]{@{}l@{}} Dev Acc (\%) \end{tabular} & \begin{tabular}[c]{@{}l@{}} Test Acc (\%) \end{tabular}   \\ \midrule
            TaBERT (K=1), w/ positions    & 94.11	& 93.44	 \\
            TaBERT (K=1), w/o positions	    & 88.62	& 88.91	 \\
		TaBERT (K=3), w/ positions   & 95.82	& 95.22	 \\
		TaBERT (K=3), w/o positions  & 92.80 &	92.44	  \\ 
        \bottomrule
        \bottomrule
        \end{tabular}
% \end{wraptable} 
\caption{Excessive invariance causes performance slumping}
\label{tab:inv}
\end{table*}

% \subsection{\textsc{HyTrel} Learns the Similarity of Related Table Elements}
% \pp{consider about the similarity of same classes}
% \pp{}

% One goal of the \textsc{HyTrel} model is to learn the similarity among related table elements, including cells from the same column, row, or table, and columns/ rows from the same table. We measure the average distance (L2 Norm, normalized) among the learned representations for both related and unrelated table elements. We expect better representations to have saliently larger L2 norm for unrelated elements as compared with related ones.  

% \begin{figure}[h]
% \begin{center}
% % \setlength{\belowcaptionskip}{-0.75cm} 
% \includegraphics[width=360pt]{}
% \caption{Average Similarity Score between related and unrelated table elements.}
% \label{fig:dis}
% \end{center}
% \end{figure}

% As observed from the Figure~\ref{fig:dis}, generally the \textsc{HyTrel} model pretrained with the tow objectives learns significantly different representations for unrelated table elements, and the   

% 1. \textsc{HyTrel} > TaBERT, Advantage of \textsc{HyTrel}, TaBERT flaws
% 2. Difference between two objectives

% Please note that the absolute distance value cannot represent the similarity learned as the... 

\subsection{Inference Complexity of the Model}
\label{appdx:complex}

{\bf Theoretical Analysis}: Given a table with $m$ rows and $n$ columns, let’s assume each cell only contains one token, and we use dot-product and single head in the attention operation. The inference complexity comparison of one layer attention is:

\begin{table*}[h]
\centering
	\begin{tabular}{@{}ccc@{}}
		\toprule
        \toprule
		\begin{tabular}[c]{@{}l@{}} Complexity \end{tabular} & \begin{tabular}[c]{@{}l@{}} BERT-based Methods \end{tabular} & \begin{tabular}[c]{@{}l@{}} \textsc{HyTrel} \end{tabular}   \\ \midrule
            w/o linear transformation    & $O((mn)^2\cdot d)$	& $O(mnd)$ \\
            w/ linear transformation	    & $O((mn)^2\cdot d + (mn)\cdot d^2)$& $O(mnd+(mn)\cdot d^2)$	 \\
        \bottomrule
        \bottomrule
        \end{tabular}
% \end{wraptable} 
\caption{Inference Complexity}
\label{tab:inv}
\end{table*}

For the BERT-based methods, the input sequence length is $mn$, $d$ is the hidden dimension. We can see that our HyTrel has less complexity, and the difference comes from the dot-product for attention scores. The query in the set transformer of HyTrel is a learnable vector with $d$ dimension, it reduces computation time of self-attention from quadratic to linear in the number of elements in the set~\citep{lee2019set}, with a complexity of $O(mnd)$. In the self-attention, the query is a matrix from the whole input sequence with $(mn) \times d$ dimensions, which has a complexity of $O((mn)^2\cdot d)$ in the attention score calculation.

{\bf Empirical Analysis}: We include comparison of our approach with the TaBERT baseline about the inference time, including data processing (graph building time in HyTrel). As observable from the table, the total inference time of our model is lesser compared to TaBERT, and the time cost for graph building is not a bottleneck.

% Please add the following required packages to your document preamble:
% \usepackage[table,xcdraw]{xcolor}
% Beamer presentation requires \usepackage{colortbl} instead of \usepackage[table,xcdraw]{xcolor}
\begin{table}[h]
\centering
\begin{tabular}
{l r r r r}
\toprule
\toprule
{Datasets}                 & { \textbf{CTA}} & { \textbf{CPA}} & { \textbf{TTD }} & {\textbf{TS}} \\
\midrule
{Total Samples}                              & {4844}                           & {1560}                           & {4500}                           & {1391}                          \\
{batch size}                              & {8}                           & {8}                           & {8}                           & {8}                          \\
\midrule
{TaBERT (K=1) - data preparing}           & {13}                          & {5}                           & {51}                          & {13}                         \\
{TaBERT (K=1) - inference}                & {27}                          & {5}                           & {25}                          & {28}                         \\
{\textbf{TaBERT (K=1) - total inference}} & { \textbf{40}}                 & { \textbf{10}}                 & { \textbf{76}}                 & { \textbf{41}}                \\
\midrule
{TaBERT (K=3) - data preparing}           & {14}                          & {5}                           & {52}                          & {13}                         \\
{TaBERT (K=3) - inference}                & {85}                          & {16}                          & {79}                          & {80}                         \\
{\textbf{TaBERT (K=3) - total inference}} & { \textbf{99}}                 & { \textbf{21}}                 & { \textbf{131}}                & { \textbf{93}}                \\
\midrule
{HyTrel - graph building}                 & {5}                           & {1}                           & {16}                          & {5}                          \\
{HyTrel - inference}                      & {16}                          & {6}                           & {17}                          & {13}                         \\
{\textbf{HyTrel - total inference}}       & { \textbf{21}}                 & { \textbf{7}}                  & { \textbf{33}}                 & { \textbf{18}}       \\
\bottomrule
\bottomrule
\end{tabular}
\vspace{5pt}
\caption{Inference Time Comparision (seconds)}
\end{table}

\begin{itemize}[noitemsep, topsep=5pt, leftmargin=*]
  \item We keeps a maximal of 3 rows with the HyTrel model for fair comparison with the BERT (K=3) models.
  \item The data preprocessing of TaBERT is to format the input tables (.json) into tensors, and the graph building of HyTrel is to format the input tables (.json) into feature tensors and the incidence matrix in hypergraphs.
  \item All experiments are conducted on a single A10 GPU, and the inference batch sizes are all chosen to be 8 for all models and all dataset;
  \item We use the validation set of CTA, CPA and TTD for experiments. For TS with a small number of tables that is tested with five fold cross-validation in the paper, here we use the whole dataset for experiments.
% \vspace{-8pt}
\end{itemize}

\section{Details of the Experiments}
\label{appdx:exp}

\subsection{Pretraining}
\label{appdx:pretrain}

\subsubsection{Model Size}

The parameter size of the \textsc{HyTrel} and the baseline TaBERT are as followed in Table~\ref{tab:size}. \textsc{HyTrel} has more parameters mainly because of the \texttt{Hyperedge Fusion} block (Figure ~\ref{fig:framework}) that is absent from the TaBERT model.

\begin{table*}[h]
\centering
	\begin{tabular}{@{}ccc@{}}
		\toprule
        \toprule
	\begin{tabular}[c]{@{}l@{}} Models \end{tabular} & \begin{tabular}[c]{@{}l@{}} \# Parameters (million) \end{tabular}   \\ \midrule
            TaBERT (K=1)   & 110.6		 \\
            TaBERT (K=3)    & 131.9		 \\
		\textsc{HyTrel}  & 179.4		 \\
        \bottomrule
        \bottomrule
        \end{tabular}
% \end{wraptable} 
\caption{Model Size}
\label{tab:size}
\end{table*}

\subsubsection{Hyperparameters}
The hyperparameter choices and resource consumption details of the pretraining are presented in Table~\ref{tab:param1} and Table~\ref{tab:cost}.

% \begin{table}[h]
% % \setlength{\tabcolsep}{8pt}
% % \setlength{\belowcaptionskip}{-15pt}
% \centering
% 	\begin{tabular}{@{}cccccccccc@{}}
% 		\toprule
%         \toprule
% 		\begin{tabular}[c]{@{}l@{}}Models \end{tabular} & \begin{tabular}[c]{@{}l@{}} Batch \\ Size \end{tabular} & \begin{tabular}[c]{@{}l@{}} Learning \\ Rate \end{tabular} & \begin{tabular}[c]{@{}l@{}} Warm-up \\ Ratio  \end{tabular} & \begin{tabular}[c]{@{}l@{}} Max \\ Epochs \end{tabular} & \begin{tabular}[c]{@{}l@{}} Weight \\ Decay \end{tabular} & \begin{tabular}[c]{@{}l@{}} Optimizer \end{tabular}  & \begin{tabular}[c]{@{}l@{}} Adam \\ Beta1 \end{tabular} & \begin{tabular}[c]{@{}l@{}} Adam \\ Beta2 \end{tabular} & \begin{tabular}[c]{@{}l@{}} Adam \\ Epsilon \end{tabular}\\ \midrule
% 		\textsc{HyTrel}  \textit{w/} ELECTRA    & 8192  &  5e$^{-4}$  & 0.05   & 5 & 0.02 & Adam & 0.9 & 0.98 & 1e$^{-5}$ \\
% 		\textsc{HyTrel}  \textit{w/} Contrastive  & 4096  & 1e$^{-4}$  & 0.05   & 5  & 0.02 & Adam & 0.9 & 0.98 & 1e$^{-5}$ \\ 
%         \bottomrule
%         \bottomrule
% 	\end{tabular}
% \caption{Pretraining Hyper-parameters}
% \label{tab:param1}
% \end{table}

\begin{table}[h]
\small
\centering
	\begin{tabular}{@{}ccccccc@{}}
		\toprule
        \toprule
		\begin{tabular}[c]{@{}l@{}}Models \end{tabular} & \begin{tabular}[c]{@{}l@{}} Batch \\ Size \end{tabular} & \begin{tabular}[c]{@{}l@{}} Learning \\ Rate \end{tabular} & \begin{tabular}[c]{@{}l@{}} Warm-up \\ Ratio  \end{tabular} & \begin{tabular}[c]{@{}l@{}} Max \\ Epochs \end{tabular} & \begin{tabular}[c]{@{}l@{}} Weight \\ Decay \end{tabular} & \begin{tabular}[c]{@{}l@{}} Optimizer \end{tabular}  \\ \midrule
		\textsc{HyTrel}  \textit{w/} ELECTRA    & 8192  &  5e$^{-4}$  & 0.05   & 5 & 0.02 & Adam  \\
		\textsc{HyTrel}  \textit{w/} Contrastive  & 4096  & 1e$^{-4}$  & 0.05   & 5  & 0.02 & Adam  \\ 
        \bottomrule
        \bottomrule
	\end{tabular}
\caption{Pretraining Hyperparameters}
\label{tab:param1}
\end{table}

\begin{table}[h]
\small
\setlength{\tabcolsep}{5pt}
\centering
	\begin{tabular}{@{}ccccc@{}}
		\toprule
        \toprule
		\begin{tabular}[c]{@{}l@{}} Models \end{tabular} & \begin{tabular}[c]{@{}l@{}} GPUs \end{tabular} & \begin{tabular}[c]{@{}l@{}} Accelerator \end{tabular} & \begin{tabular}[c]{@{}l@{}} Precision   \end{tabular} & \begin{tabular}[c]{@{}l@{}}  Training  Time \end{tabular}  \\ \midrule
		\textsc{HyTrel}  \textit{w/} ELECTRA    & 16 $\times$ NVIDIA A100  &  DeepSpeed (ZeRO Stage 1)  & bf16   &   6h / epoch \\
		\textsc{HyTrel}  \textit{w/} Contrastive  & 16 $\times$ NVIDIA A100  & DeepSpeed (ZeRO Stage 1)  & bf16   & 4h / epoch  \\ 
        \bottomrule
        \bottomrule
	\end{tabular}
\caption{Pretraining Cost}
\label{tab:cost}
\end{table}

\subsection{Fine-tuning}
\label{appdx:fine-tuning}

% \begin{table}[h]
% % \setlength{\tabcolsep}{8pt}
% % \setlength{\belowcaptionskip}{-15pt}
% \centering
% 	\begin{tabular}{@{}cccccccccc@{}}
% 		\toprule
%         \toprule
% 		\begin{tabular}[c]{@{}l@{}}Tasks \end{tabular} & \begin{tabular}[c]{@{}l@{}} Batch \\ Size \end{tabular} & \begin{tabular}[c]{@{}l@{}} Learning \\ Rate \end{tabular} & \begin{tabular}[c]{@{}l@{}} Warm-up \\ Ratio  \end{tabular} & \begin{tabular}[c]{@{}l@{}} Max \\ Epochs \end{tabular} & \begin{tabular}[c]{@{}l@{}} Weight \\ Decay \end{tabular} & \begin{tabular}[c]{@{}l@{}} Optimizer \end{tabular}  & \begin{tabular}[c]{@{}l@{}} Adam \\ Beta1 \end{tabular} & \begin{tabular}[c]{@{}l@{}} Adam \\ Beta2 \end{tabular} & \begin{tabular}[c]{@{}l@{}} Adam \\ Epsilon \end{tabular}\\ \\ \midrule
% 		Column Type Annotation    & 256  &  1e$^{-3}$  & 0.05   & 50 & 0.02 & Adam & 0.9 & 0.98  & 1e$^{-5}$  \\
%             Column Property Annotation    & 256  &  1e$^{-3}$  & 0.05   & 30 & 0.02 & Adam & 0.9 & 0.98   & 1e$^{-5}$ \\
%             Table Type Detection    & 32  &  5e$^{-5}$  & 0.05   & 10 & 0.02 & Adam & 0.9 & 0.98  & 1e$^{-5}$ \\
%             Table Similarity Prediction    & 16  &  5e$^{-5}$  & 0.05   & 10 & 0.02 & Adam & 0.9 & 0.98  & 1e$^{-5}$  \\
%         \bottomrule
%         \bottomrule
% 	\end{tabular}
% \caption{Hyper-parameters for Fine-tuning \textsc{HyTrel}. We use the same settings for all variants of \textsc{HyTrel} (no pretraining, pretraining with ELECTRA or Contrastive objectives)}
% \label{tab:param1}
% \end{table}

\subsubsection{Details of the Datasets}

% \pp{add how to choose these parameters}

 {\bf TURL-CTA} is a multi-label classification task (this means for one column, there can be multiple labels annotated) that consists of 255 semantic types. In total, it has 628,254 columns from 397,098 tables for training, and 13,025 (13,391) columns from 4,764 (4,844) tables for testing (validation).

 {\bf TURL-CPA} dataset is also a multi-label classification task in which each pair of columns may belong to multiple relations. It consists of a total number of 121 relations with 62,954 column pairs from 52,943 tables training sample, and 2,072 (2,175) column pairs from 1,467 (1,560) tables for testing (validation). Similar to the CTA task, we use the pairwise column representations from \textsc{HyTrel} with their corresponding hyperedge representations for fine-tuning.    

{\bf WDC Schema.org Table Corpus} consists of 4.2 million relational tables covering 43 schema.org entity classes. We sample tables from the top 10 classes and construct a table type detection dataset that contains 3,6000 training tables and 4,500 (4,500) tables for validation (testing).

 {\bf PMC} is a table corpus that is formed from PubMed Central (PMC) Open Access subset, and this collection is about biomedicine and life sciences. In total, PMC contains 1,391 table pairs, where 542 pairs are similar and 849 pairs are dissimilar. For each pair of tables, we use their table representations together with column representations from \textsc{HyTrel} for fine-tuning. As for evaluation, we follow previous work~\citep{tabsim_inproceedings, trabelsi22www} and report the macro-average of the 5-fold cross-validation performance.

\subsubsection{Hyperparameters}

The hyperparameter choices for fine-tuning both the TaBERT and the \textsc{HyTrel} models are as followed. For the primary hyperparameters \texttt{batch size} and \texttt{learning rate}, we choose the \texttt{batch size} to be as large as possible to fit the GPU memory constraint, and then use grid search to find the best \texttt{learning rate}. Other hyperparameters are kept the same as in previous work.

\begin{table}[h]
\centering
	\begin{tabular}{@{}ccccccc@{}}
		\toprule
        \toprule
		\begin{tabular}[c]{@{}l@{}}Tasks \end{tabular} & \begin{tabular}[c]{@{}l@{}} Batch \\ Size \end{tabular} & \begin{tabular}[c]{@{}l@{}} Learning \\ Rate \end{tabular} & \begin{tabular}[c]{@{}l@{}} Warm-up \\ Ratio  \end{tabular} & \begin{tabular}[c]{@{}l@{}} Max \\ Epochs \end{tabular} & \begin{tabular}[c]{@{}l@{}} Weight \\ Decay \end{tabular} & \begin{tabular}[c]{@{}l@{}} Optimizer \end{tabular}   \\ \midrule
		Column Type Annotation    & 256  &  1e$^{-3}$  & 0.05   & 50 & 0.02 & Adam \\
            Column Property Annotation    & 256  &  1e$^{-3}$  & 0.05   & 30 & 0.02 & Adam  \\
            Table Type Detection    & 32  &  5e$^{-5}$  & 0.05   & 10 & 0.02 & Adam \\
            Table Similarity Prediction    & 16  &  5e$^{-5}$  & 0.05   & 10 & 0.02 & Adam   \\
        \bottomrule
        \bottomrule
	\end{tabular}
\caption{Hyperparameters for Fine-tuning \textsc{HyTrel}. We use the same settings for all variants of \textsc{HyTrel} (no pretraining, pretraining with ELECTRA and Contrastive objectives)}
\label{tab:param2}
\end{table}

\begin{table}[h]
\small
\setlength{\tabcolsep}{3pt}
\centering
\begin{tabular}{@{}cccccc@{}}
\toprule
\toprule
\begin{tabular}[c]{@{}l@{}}Tasks \end{tabular} & \begin{tabular}[c]{@{}l@{}} Batch \\ Size \end{tabular} & \begin{tabular}[c]{@{}l@{}} Learning \\ Rate \end{tabular} & \begin{tabular}[c]{@{}l@{}} Warm-up \\ Ratio  \end{tabular} & \begin{tabular}[c]{@{}l@{}} Max \\ Epochs \end{tabular} & \begin{tabular}[c]{@{}l@{}} Optimizer \end{tabular}   \\ \midrule
Column Type Annotation    & 64 (K=3), 128 (K=1)  &  5e$^{-5}$  & 0.1   & 10  & Adam \\
    Column Property Annotation    & 48 (K=3), 128 (K=1)   &  1e$^{-5}$ (K=3), 2e$^{-5}$ (K=1)  & 0.1   & 10  & Adam  \\
    Table Type Detection    & 64  &  5e$^{-5}$  & 0.1   & 10 &  Adam \\
    Table Similarity Prediction    & 8  &  5e$^{-5}$  & 0.1   & 10 &  Adam   \\
\bottomrule
\bottomrule
\end{tabular}
\caption{Hyperparameters for Fine-tuning TaBERT.}
\label{tab:param3}
\end{table}

\subsubsection{Details of the Baselines}
\label{appdx:baseline}

\noindent {\bf TaBERT}~\citep{yin20acl} is a representative TaLM that jointly learns representations for sentences and tables. It flattens the tables into sequences and pretrains the model from BERT checkpoints. Since we use the same source and prepossessing for the pretraining tables, we use it as our main baseline. TaBERT is not yet evaluated on the table-only tasks, we fine-tune the \textit{base} version with both \textit{K=1} and \textit{K=3} row number settings on all the tasks. We also evaluate the TaBERT model that is randomly initialized and see how much it relies on the pretraining.

\noindent {\bf TURL}~\citep{deng2020turl} is another representative TaLMs that focuses on web tables with cell values as entities connected to the knowledge base. It also flattens the tables into sequences and pretrains from TinyBERT~\citep{jiao-etal-2020-tinybert} checkpoints. It introduces a vision matrix to incorporate table structure into the representations. We copy the reported results for CTA and CPA tasks. For a fair comparison, we use the fine-tuning settings with metadata (captions and headers) but without external pretrained entity embeddings. 

\noindent {\bf  Doduo}~\citep{10.1145/3514221.3517906} is a state-of-the-art column annotation system that fine-tunes the BERT and uses table serialization to incorporate table content. We copy the reported results for the CTA and CPA tasks.

For each specific task and dataset, we also report the state-of-the-art baseline results from previous work. For the TURL-CTA dataset, we use the reported results of Sherlock~\citep{Hulsebos:2019:SDL:3292500.3330993}. For the TURL-CPA dataset, we use the reported results of fine-tuning the BERT$_{base}$~\citep{devlin-etal-2019-bert} model. For the PMC dataset, we use the reported results from two baselines: TFIDF+Glove+MLP, which uses TFIDF and Glove~\citep{pennington-etal-2014-glove} features of a table with a fully connected MLP model, and TabSim~\citep{tabsim_inproceedings} that uses a siamese Neural Network to encode a pair of tables for prediction.   

\end{document}